\documentclass[twoside]{article}

\usepackage{microtype}
\usepackage{graphicx}
\usepackage{subcaption}
\usepackage{url}
\usepackage{amsthm}
\usepackage{amssymb}
\newtheorem{theorem}{Theorem}
\usepackage{booktabs} 
\usepackage[ruled,vlined]{algorithm2e}
\usepackage[preprint]{aistats2027}
\usepackage{cleveref}
\usepackage[round]{natbib}

\begin{document}

%
\runningtitle{ISBO: Scalable Spatio-Temporal Bayesian Optimization}

%

\twocolumn[

\aistatstitle{ISBO: Scalable Spatio-Temporal Bayesian Optimization with Log Gaussian Cox Process Models via the INLA-SPDE Approach}

\aistatsauthor{ Kaichuang Yang \And Håvard Rue  \And  Jakob Zeitler}

\aistatsaddress{KAUST\\University of Oxford \And KAUST \And University of Oxford } ]

\begin{abstract}
Bayesian Optimization (BO) is a popular method for efficiently optimizing expensive black-box objectives. However, BO utilizing standard Gaussian Processes is ill-suited for doubly stochastic Cox Processes that are often used in spatio-temporal problem spaces. We introduce INLA–SPDE Spatio-Temporal Bayesian Optimization (ISBO): the first scalable BO framework for spatio-temporal data, that models the log-intensity with a Log-Gaussian Cox Process(LGCP) and performs inference via Integrated Nested Laplace Approximation and Stochastic Partial Differential Equations (INLA–SPDE) approach. Using a Matérn field on meshes yields a sparse Gaussian Markov Random Field, where INLA provides fast and accurate posterior inference throughout sequential optimization. ISBO stably locates high-intensity regions and the peak of the latent intensity with minimal evaluations. A time-varying Upper Confidence Bound acquisition with masking avoids revisits, while penalized-complexity priors regularize early rounds. Experiments on synthetic and real-world spatio-temporal datasets show accurate peak discovery, intensity recovery, and substantial speedups over an RKHS-based baseline, positioning ISBO as a practical choice for BO with point-process data.
\end{abstract}

\section{Introduction}

Optimization problems arise widely in science and engineering, where the search for good settings often requires costly evaluations of black-box objectives. Bayesian Optimization (BO) is a sample-efficient optimization method that uses probabilistic surrogate models trained on samples of the unknown objective function and selects new points to sample by maximizing an acquisition function over the surrogate. BO has shown strong results in materials design \citep{materials_design}, experimental physics \citep{exp_physics} and reinforcement learning \citep{reinforcement_learning}. 

BO generally assumes each query to return a real-valued, possibly noisy, function value and models it with Gaussian Process (GP) regression. This yields a closed-form posterior on which we can apply acquisition functions such as Upper Confidence Bounding (UCB) \citep{lai1985UCB}, Probability of Improvement (PI) \citep{kushner1964PI}, and Expected Improvement (EI) \citep{movckus1974EI}. However, this standard setup does not fit spatio-temporal problems where the data come as point events generated by a latent intensity. Here, we define intensity as the instantaneous event rate i.e. expected events per unit time for temporal data and per unit area for spatial data (also referred to as the intensity surface). Over small intervals or regions, expected counts scale with length or area, and higher intensity means more frequent events. The observed pattern is doubly stochastic for such a process: conditional on the latent intensity, the points follow an inhomogeneous Poisson process, while the intensity itself is random as a realization of a stochastic field. Thus, randomness enters twice through Poisson sampling and the stochastic intensity. Typical examples include locations of forest trees, the GPS coordinates of disease cases, arrivals in transport, or events in finance and neuroscience. In these cases, a natural model choice is the (spatio-)temporal Cox Process (CP), specifically the Gaussian Cox Process (GCP), where the intensity is a Gaussian random field through a link function. 

BO over GCPs requires reliable posterior uncertainty in the latent field, since acquisition functions depend on both posterior mean and uncertainty. Bayesian GCP inference methods such as MCMC and variational inference can provide this uncertainty, but may be computationally expensive for sequential optimization, where the model must be repeatedly refitted after each query. See \citet{diggle2013sp_point_pattern} and \cite{flagg2023LGCPINLAReview} for an overview of spatial and spatio-temporal point processes.

In this paper we propose INLA–SPDE Spatio-Temporal Bayesian Optimization (ISBO): a scalable BO framework for temporal, spatial, and spatio-temporal point-process data that combines the log-Gaussian Cox Process (LGCP) with the Integrated Nested Laplace Approximation (INLA) and the Stochastic Partial Differential Equation (SPDE) approach. As far as we are aware, the first work of this kind is \citet{meiBayesianOptGCPMstD}, which we adopt as our baseline and improve over. We additionally compare against the fast LGCP inference method of \citet{dovers2023fast} at the inference level. Specifically, ISBO is the first computationally scalable solution for Bayesian Optimization over spatio-temporal data, enabled by the INLA-SPDE approach.

\paragraph{Our contributions}
\begin{itemize}

    \item We develop ISBO, a scalable Bayesian optimization framework for point-process data, where each query reveals events within a local region rather than a noisy scalar evaluation, and sequentially updates a probabilistic surrogate to optimize the latent intensity over temporal and spatial domains.

    \item We introduce a latent-scale, region-aware optimization strategy combining time-varying UCB, masking, and region-based regret measures tailored to point-process observations.

    \item We demonstrate substantial improvements of ISBO over existing baselines on both synthetic and real-world datasets in terms of optimization performance and computational scalability, and conduct detailed ablation studies to quantify the contribution of its key model components.
    
\end{itemize}
\section{Related Work}

\paragraph{Log-Gaussian Cox Process Models}

We focus on LGCPs, i.e., Cox Processes where the logarithm of the intensity is modeled as a Gaussian process \citep{moller1998LGCP}. Because a Cox Process is doubly stochastic, intensity estimation is more challenging than for deterministic intensity \citep{flaxman2017poisson_inten}. LGCPs serve as the natural analogue of linear Gaussian models for point-process data in geostatistics and provide a flexible, tractable class for spatial prediction \citep{Diggle_LGCP}. 

\paragraph{Integrated Nested Laplace Approximation}

Integrated Nested Laplace Approximation (INLA) is better viewed as an evolving framework for approximate Bayesian inference in latent Gaussian models than as a single fixed algorithm. Its development spans Gaussian Markov random fields \citep{rue2005GMRF}, the original INLA formulation \citep{rue2009INLA}, the SPDE representation of Mat\'ern fields \citep{lindgren2011explicit_GMRF_SPDE}, off-grid LGCP inference that retains exact event locations \citep{simpson2016going}, penalized-complexity priors \citep{PCprior}, modern latent-model reformulations \citep{van2023new}, and low-rank variational Bayes corrections \citep{van2024low}. The framework continues to evolve, including recent extensions toward high-performance computing \citep{11017730} and Python implementations \citep{fattah2026pyinlafastbayesianinference}. For LGCP inference, INLA provides fast deterministic posterior approximations and has become widely used in spatial statistics \citep{rue2017INLA_review,bakka2018spatial_review}. Compared with MCMC, INLA can achieve substantial computational savings while retaining accurate posterior uncertainty estimates \citep{taylor2014inlaMCMC}. This speed--accuracy trade-off is particularly attractive for BO, where the LGCP surrogate must be repeatedly refitted after each query.

\paragraph{Baseline 1: Laplace Approximation with Kernel Transformation in Reproducing Kernel Hilbert Space (RKHS)}

\citet{meiBayesianOptGCPMstD}  propose a MAP-based inference framework for Gaussian Cox processes. A Laplace approximation is first employed to obtain a Gaussian approximation of the posterior over the latent field. The resulting MAP objective is then reformulated in an RKHS by introducing a kernel transformation that absorbs the integral term into the RKHS norm. This leads to a well-posed optimization problem admitting a finite-dimensional representation via the representer theorem. For practical computation, a Nyström approximation is further developed to efficiently approximate the transformed kernel and enable posterior mean and covariance estimation for the latent intensity. Parts of the derivation assume a large-$n$ regime where the likelihood dominates the prior and Laplace normalization terms (equation (5) in \citep{meiBayesianOptGCPMstD}), an assumption that may not hold in early BO rounds. In our reproductions and additional tests, we observed notable run-to-run variability in the optimization trajectory, sensitivity to initialization and hyperparameters, and limited scalability with respect to sample size and domain resolution. These factors lead to unstable acquisitions and slower convergence, with higher computational cost compared to a SPDE-INLA approach. This is both BO and inference level baseline.

\paragraph{Baseline 2: Fast Approximate LGCP Inference}

\citet{dovers2023fast} propose a fast approximate maximum-likelihood framework for LGCPs based on reduced-rank interpolation, likelihood approximation, and automatic differentiation. Their method combines either variational or Laplace approximation with a low-rank representation of the latent Gaussian field, substantially reducing the computational cost of evaluating the high-dimensional marginal likelihood. Automatic differentiation is used to provide exact gradients for efficient optimization. Their experiments report substantial speed-ups over the INLA implementation available at the time, mainly due to rank reduction and automatic differentiation, with only a small loss in model fit \citep{dovers2023fast}.  In our experiments, it is faster than the inference procedure of \citet{meiBayesianOptGCPMstD}, but slower than modern INLA, its runtime increases with the number of events, while modern INLA doesn't. This is inference level only baseline.

\paragraph{Bayesian Optimization through Acquisition Functions} 

BO is a methodology for optimizing expensive objective functions that has proven success in the sciences, engineering, and beyond \citep{garnett_bayesoptbook_2023}. Most BO methods adopt GPs as the surrogate model. More recently, neural-network surrogates have been explored \citep{dai2022sampleNNSurog,swersky2020BONNSurog,li2023studyNNSurog}, alongside tree-based surrogates \citep{kim2022treeSurog,liang2021benchmarkingtreeSurog,shen2023proxybotreeSurog}. Much of the BO literature focuses on continuous domains, whereas many real applications involve discrete temporal or spatial point events, which are the focus of our work. Apart from \citet{meiBayesianOptGCPMstD}, which serves as our baseline, existing BO research does not address GCP models, where the Poisson event rate is modeled by a latent Gaussian process. In the remainder of this paper, we make our original contribution (ISBO) to this emerging area of research: an INLA-SPDE based approach enabling scalable BO for temporal, spatial, and spatio-temporal point-process data.

\section{Methodology}
Our method has two main components: a scalable LGCP surrogate and BO over the latent intensity. The surrogate combines off-grid SPDE modeling \citep{simpson2016going}, modern INLA inference \citep{van2023new}, and PC priors \citep{PCprior} for efficient uncertainty quantification during sequential optimization. The BO component uses a time-varying latent-scale UCB with region-based querying and masking to avoid redundant observations. Additional background and derivations are provided in Appendix~\ref{app:LGMs}.

\subsection{LGCP Surrogate Model with Modern INLA--SPDE}
\label{sec:lgcp-inla-spde}

We model the latent event intensity with a Log-Gaussian Cox Process (LGCP) and use its posterior as the surrogate model for Bayesian optimization. Let $\mathcal D\subset\mathbb R^d$ denote the observation domain and $\mathcal A=\{s_i\}_{i=1}^{n}$ the currently observed event locations.
An LGCP specifies a stochastic intensity
\begin{align}
    \lambda(s)
    &=
    \exp\{\eta(s)\},
    \qquad s\in\mathcal D,
    \label{eq:lgcp-link}
\end{align}
where $\eta(s)$ is a latent Gaussian linear predictor. Conditional on $\lambda(\cdot)$, the point pattern is an inhomogeneous Poisson process. Hence, for any measurable region $\mathcal B\subseteq\mathcal D$,
\begin{align}
    N(\mathcal B)\mid\lambda
    &\sim
    \operatorname{Poisson}
    \left(
        \int_{\mathcal B}\lambda(s)\,ds
    \right),
\end{align}
where $N(\mathcal B)$ denotes the number of events in $\mathcal B$.
For the observed pattern $\mathcal A$, the log-likelihood is
\begin{align}
    \log p(\mathcal A\mid\eta)
    &=
    \sum_{i=1}^{n}\eta(s_i)
    -
    \int_{\mathcal D}\exp\{\eta(s)\}\,ds
    + C,
    \label{eq:lgcp-loglikelihood}
\end{align}
where $C$ does not depend on the latent field.

We model the linear predictor as
\begin{align}
    \eta(s)
    &=
    \alpha_0
    +
    U(s)
    +
    \mathbf z(s)^\top\boldsymbol{\beta},
    \label{eq:lgcp-linear-predictor}
\end{align}
where $\alpha_0$ is an intercept, $U(s)$ is a zero-mean Mat\'ern Gaussian field, $\mathbf z(s)$ is a vector of optional explanatory variables, and $\boldsymbol{\beta}$ contains the corresponding regression coefficients.

\paragraph{Off-grid SPDE representation.}
A common approach to LGCP inference discretizes $\mathcal D$ into a fine regular lattice and approximates the number of events in each cell by a Poisson random variable. Such an approximation depends on the chosen grid resolution and effectively bins events into cells. Following the off-grid approach of \citet{simpson2016going}, we instead retain the exact event locations and represent the Gaussian field continuously through a finite-dimensional basis expansion.

Using the SPDE approach of \citet{lindgren2011explicit_GMRF_SPDE},
\begin{align}
    U(s)
    &\approx
    \sum_{r=1}^{m}\psi_r(s)w_r,
    \label{eq:spde-fem}
\end{align}
where $\{\psi_r(s)\}_{r=1}^{m}$ are piecewise-linear finite-element basis functions defined on a triangulated mesh and $\mathbf w=(w_1,\ldots,w_m)^\top$ is the corresponding vector of Gaussian weights. The SPDE construction gives $\mathbf w$ a Gaussian Markov random field representation with a sparse precision matrix determined by the Mat\'ern covariance parameters. The triangulated mesh can be refined locally where greater spatial resolution is required, without imposing a uniformly fine lattice over the whole domain.

\paragraph{Point-process integration.}
Although observed events retain their exact locations, the integral in
\eqref{eq:lgcp-loglikelihood} must be approximated numerically. Let
\begin{align}
    \mathrm{ips}^{s}
    &=
    \{(\xi_k,\omega_k)\}_{k=1}^{q}
    \label{eq:ips-space-default}
\end{align}
denote $q$ integration points $\xi_k$ and their non-negative quadrature weights $\omega_k$. Then
\begin{align}
    \int_{\mathcal D}\lambda(s)\,ds
    &\approx
    \sum_{k=1}^{q}
    \omega_k\lambda(\xi_k).
    \label{eq:lgcp-quadrature}
\end{align}
This gives the augmented Poisson representation used for LGCP inference: the integration points approximate the integral term, while the observed events contribute at their original locations.

\paragraph{Modern INLA formulation.}
Following \citet{van2023new}, the linear predictors are not included as explicit components of the latent field. For our LGCP, we define
\begin{align}
    \mathbf{x}
    &=
    \left(
        \alpha_0,\,
        \mathbf w^\top,\,
        \boldsymbol{\beta}^\top
    \right)^\top,
    \quad
    \mathbf{x}\mid\boldsymbol{\theta}
    \sim
    \mathcal N
    \left(
        \mathbf 0,
        Q(\boldsymbol{\theta})^{-1}
    \right),
    \label{eq:modern-latent-field}
\end{align}
where $\mathbf w$ represents the Mat\'ern field through \eqref{eq:spde-fem}, $\boldsymbol{\theta}$ denotes the non-Gaussian hyperparameters, and $Q(\boldsymbol{\theta})$ is the sparse latent precision matrix. In particular, $\boldsymbol{\theta}$ includes the Mat\'ern--SPDE range and marginal standard deviation parameters.

For any collection of observation, integration, or prediction locations, the corresponding linear predictors are obtained through
\begin{align}
    \boldsymbol{\eta}
    &=
    A\mathbf{x},
    \label{eq:modern-eta}
\end{align}
where $A$ is a sparse projection matrix. Its rows evaluate the intercept, covariates, and finite-element basis functions at the corresponding locations.

For fixed $\boldsymbol{\theta}$, modern INLA constructs the Gaussian
approximation
\begin{align}
    \mathbf{x}\mid\boldsymbol{\theta},\mathbf y
    &\approx
    \mathcal N
    \left(
        \boldsymbol{\mu}(\boldsymbol{\theta}),
        Q_X(\boldsymbol{\theta})^{-1}
    \right),
    \label{eq:modern-lgcp-gaussian}
\end{align}
where $\mathbf y$ denotes the augmented likelihood representation and
\begin{align}
    Q_X(\boldsymbol{\theta})
    &=
    Q(\boldsymbol{\theta})
    +
    A^\top D(\boldsymbol{\theta})A.
    \label{eq:modern-lgcp-posterior-precision}
\end{align}
Here $D(\boldsymbol{\theta})$ is a diagonal matrix containing the local curvature contributions from the likelihood. Since both $Q(\boldsymbol{\theta})$ and $A$ are sparse, the conditional posterior precision remains sparse.

\paragraph{Penalized complexity (PC) priors.}
We assign PC priors \citep{PCprior} to the key Mat\'ern--SPDE hyperparameters. For a latent effect with marginal standard deviation $\sigma>0$, the prior can be specified through
\begin{align}
    \Pr(\sigma>u)
    &=
    \alpha_{\mathrm{pc}},
    \qquad
    u>0,\quad
    0<\alpha_{\mathrm{pc}}<1,
    \label{eq:pc-prior-sigma}
\end{align}
where $(u,\alpha_{\mathrm{pc}})$ controls the scale and upper-tail probability. We analogously specify a PC prior for the practical Mat\'ern range. These priors regularize the latent field when relatively few events have been revealed, which is particularly relevant during early BO iterations. Further details are given in Appendix~\ref{app:PC_prior}.

\paragraph{Posterior surrogate.}
For a prediction location $s$, let $\mathbf a(s)^\top$ denote the corresponding row of $A$. From \eqref{eq:modern-lgcp-gaussian},
\begin{align}
    \eta(s)\mid\boldsymbol{\theta},\mathbf y
    &\approx
    \mathcal N
    \left(
        \mu_{\eta}(s;\boldsymbol{\theta}),
        \sigma_{\eta}^{2}(s;\boldsymbol{\theta})
    \right),
    \label{eq:eta-conditional-posterior}
\end{align}
where
\begin{align}
    \mu_{\eta}(s;\boldsymbol{\theta})
    &=
    \mathbf a(s)^\top
    \boldsymbol{\mu}(\boldsymbol{\theta}),
    \\
    \sigma_{\eta}^{2}(s;\boldsymbol{\theta})
    &=
    \mathbf a(s)^\top
    Q_X(\boldsymbol{\theta})^{-1}
    \mathbf a(s).
    \label{eq:eta-conditional-moments}
\end{align}
Marginalization over the INLA support points
$\{\boldsymbol{\theta}_k\}_{k=1}^{K}$ gives
\begin{align}
    \tilde p(\eta(s)\mid\mathbf y)
    &\approx
    \sum_{k=1}^{K}
    p_G
    \left(
        \eta(s)
        \mid
        \boldsymbol{\theta}_k,\mathbf y
    \right)
    \tilde p
    \left(
        \boldsymbol{\theta}_k\mid\mathbf y
    \right)
    \delta_k ,
    \label{eq:eta-marginal-posterior}
\end{align}
where $\delta_k$ denotes the numerical integration weight. For the non-Gaussian LGCP likelihood, we additionally use the low-rank variational Bayes correction described in Appendix~\ref{app:LRCorVB} to improve the posterior mean.

At BO iteration $t$, let $\mathcal A_t$ denote the events revealed up to iteration $t$. Refitting the LGCP yields
\begin{align}
    \mu_{\eta,t}(x)
    &=
    \mathbb E[\eta(x)\mid\mathcal A_t],
    \sigma_{\eta,t}^{2}(x)
    =
    \operatorname{Var}[\eta(x)\mid\mathcal A_t],
    \label{eq:isbo-surrogate}
\end{align}
which form the probabilistic surrogate used by the acquisition rule in \cref{sec:bo-over-intensity}. Since $\lambda(x)=\exp\{\eta(x)\}$, posterior summaries of the intensity are obtained directly by transforming the posterior of $\eta(x)$.

The SPDE mesh, integration points, and BO candidate grid have distinct roles: the mesh represents the latent Mat\'ern field, the integration points approximate the point-process likelihood, and the BO grid determines where the acquisition function is evaluated. They therefore need not coincide.

For temporal point processes, the same formulation applies on $\mathcal T=[t_{\min},t_{\max}]$ with $\lambda(t)=\exp\{\eta(t)\}$ and a one-dimensional SPDE mesh.

\subsection{Bayesian Optimization over Estimated Intensity}
\label{sec:bo-over-intensity}

We use the LGCP posterior from
\cref{sec:lgcp-inla-spde} as the surrogate model for Bayesian optimization. Unlike standard BO, a query in the point-process setting does not directly observe a noisy value of the objective at a selected point. Instead, selecting a query location reveals the point-process events within a local region around that location, after which the LGCP surrogate is refitted.

\paragraph{Optimization objective and region-based queries.}
Let $\mathcal G$ denote a finite set of candidate query locations and define
\begin{align}
    x^\star
    &\in
    \arg\max_{x\in\mathcal G}\lambda(x)
    \label{eq:bo-optimum}
\end{align}
as a maximizer of the latent intensity. Our objective is to identify $x^\star$, or an associated high-intensity region, using as few revealed regions as possible.

At BO iteration $t$, selecting a centre $x_t\in\mathcal G$ reveals all point-process events within
\begin{align}
    \mathcal W(x_t;r)
    &=
    \{x:\lVert x-x_t\rVert\leq r\}\cap\mathcal D,
    \label{eq:bo-query-window}
\end{align}
where $r>0$ is the query radius. In the one-dimensional temporal case,
\begin{align}
    \mathcal W(x_t;r)
    &=
    [x_t-r,x_t+r]\cap\mathcal T .
\end{align}
The newly revealed events are added to the current event set $\mathcal A_t$, and the LGCP is subsequently refitted. Hence, the sequential update can be summarized as
\begin{align}
    \mathcal A_t
    \longrightarrow
    p(\eta\mid\mathcal A_t)
    \longrightarrow
    x_{t+1}
    \longrightarrow
    \mathcal W(x_{t+1};r)
    \longrightarrow
    \mathcal A_{t+1}.
    \label{eq:isbo-update}
\end{align}

\begin{theorem}[Peak equivalence under the LGCP transformation]
\label{thm:latent-scale-peak}
Let $\lambda(x) =\exp\{\eta(x)\}.$ Then the latent log-intensity and the intensity have the same set of
maximizers,
\begin{align}
    \arg\max_{x\in\mathcal G}\eta(x)
    =
    \arg\max_{x\in\mathcal G}\lambda(x).
    \label{eq:eta-lambda-argmax}
\end{align}
Moreover, if
\begin{align}
    \eta(x)\mid\mathcal A_t
    \approx
    \mathcal N\!\left(
        \mu_{\eta,t}(x),
        \sigma_{\eta,t}^{2}(x)
    \right),
\end{align}
then we have $\operatorname{sd}
    \left[
        \lambda(x)\mid\mathcal A_t
    \right]$ equals to
\begin{align}
    \exp\!\left\{
        \mu_{\eta,t}(x)
        +
        \frac{1}{2}\sigma_{\eta,t}^{2}(x)
    \right\}
    \sqrt{
        \exp\!\left\{
            \sigma_{\eta,t}^{2}(x)
        \right\}-1
    }.
    \label{eq:lambda-posterior-sd}
\end{align}
\end{theorem}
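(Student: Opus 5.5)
The plan has two parts, one for each claim of \cref{thm:latent-scale-peak}.

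For the argmax identity \eqref{eq:eta-lambda-argmax}, I will use only that $\exp$ is strictly increasing on $\mathbb R$. The argument is a double inclusion over the finite set $\mathcal G$. If $x^\star$ maximizes $\eta$, then $\eta(x)\le\eta(x^\star)$ for all $x\in\mathcal G$. Applying $\exp$ gives $\lambda(x)\le\lambda(x^\star)$, so $x^\star$ maximizes $\lambda$. Conversely, if $\lambda(x)\le\lambda(x^\star)$ for all $x$, then applying $\log$ (also strictly increasing, and well defined because $\lambda>0$) gives $\eta(x)\le\eta(x^\star)$. Since $\mathcal G$ is finite, both argmax sets are nonempty, and the equality holds as sets, including when there are ties.

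For the standard deviation \eqref{eq:lambda-posterior-sd}, I will treat the stated Gaussian approximation as exact and compute lognormal moments. The key tool is the Gaussian moment generating function. If $\eta\sim\mathcal N(\mu,\sigma^2)$, then $\mathbb E[e^{k\eta}]=\exp\{k\mu+\tfrac12 k^2\sigma^2\}$. Taking $k=1$ and $k=2$ gives
\begin{align}
\mathbb E[\lambda]=e^{\mu+\sigma^2/2},\qquad \mathbb E[\lambda^2]=e^{2\mu+2\sigma^2}.
\end{align}
The variance then follows:
\begin{align}
\operatorname{Var}[\lambda]=e^{2\mu+2\sigma^2}-e^{2\mu+\sigma^2}=e^{2\mu+\sigma^2}\bigl(e^{\sigma^2}-1\bigr).
\end{align}
Taking the square root gives $e^{\mu+\sigma^2/2}\sqrt{e^{\sigma^2}-1}$. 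Substituting $\mu=\mu_{\eta,t}(x)$ and $\sigma^2=\sigma_{\eta,t}^2(x)$ yields the displayed formula. If needed, I will verify the MGF by completing the square in the Gaussian integral.

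No step is a real obstacle; the only subtlety is interpretive. The second claim is exact for the Gaussian approximating distribution. It is not exact for the INLA mixture \eqref{eq:eta-marginal-posterior} or the VB-corrected posterior, so the equality should be read as holding under the stated approximation. I will say this explicitly. I will also note that the first claim concerns the true latent function, not a posterior summary. In general the posterior mean of $\lambda$ and $\exp\{\mu_{\eta,t}\}$ have different argmaxes when $\sigma_{\eta,t}$ varies over $x$. This distinction is why the paper works on the latent scale.
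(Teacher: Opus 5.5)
Your proposal is correct and follows the paper's proof essentially step for step. For the argmax identity you use strict monotonicity of $\exp$, with $\log$ for the converse where the paper just reapplies the same monotonicity. For the standard deviation you use the Gaussian MGF at $a=1,2$, then $\operatorname{Var}[\lambda]=\mathbb E[\lambda^2]-(\mathbb E[\lambda])^2$ and a square root. Your caveat that the second claim is exact only for the Gaussian approximating distribution matches the paper's own use of $\approx$ throughout Part II.
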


The proof is provided in Appendix~\ref{app:latent-scale-proof}. Theorem~\ref{thm:latent-scale-peak} motivates performing acquisition on the latent Gaussian scale. 

\paragraph{Latent-scale acquisition.}

The exponential transformation preserves the optimization target but induces a nonlinear amplification of posterior uncertainty on the intensity scale, particularly in data-sparse regions. Inspired by \citet{srinivas2009gaussian}, we therefore define the UCB-style acquisition directly using the posterior mean and standard deviation of the latent predictor,
\begin{align}
    a_t(x)
    &=
    \mu_{\eta,t}(x)
    +
    \kappa_t\sigma_{\eta,t}(x).
    \label{eq:isbo-acquisition}
\end{align}

For a BO budget of $T$ iterations, we use a decreasing exploration schedule, with $\kappa_{\mathrm{start}}
    \geq
    \kappa_{\mathrm{final}}>0:$
\begin{align}
    \kappa_t
    &=
    \kappa_{\mathrm{start}}
    -
    \frac{t-1}{T-1}
    \left(
        \kappa_{\mathrm{start}}
        -
        \kappa_{\mathrm{final}}
    \right).
    \label{eq:kappa-schedule}
\end{align}
Thus, earlier iterations place greater emphasis on posterior uncertainty, while later iterations increasingly favor exploitation of regions with high posterior log-intensity. The values of $\kappa_{\mathrm{start}}$ and $\kappa_{\mathrm{final}}$ are specified in the corresponding experimental settings.

\paragraph{Masking and sequential exploration.}
To avoid repeatedly querying already revealed regions, let
\begin{align}
    \mathcal R_t
    &=
    \bigcup_{j=1}^{t}
    \mathcal W(x_j;r)
    \label{eq:revealed-region}
\end{align}
denote the portion of the domain revealed by the first $t$ queries. The available candidate set is
\begin{align}
    \mathcal C_t
    &=
    \{x\in\mathcal G:x\notin\mathcal R_t\}.
    \label{eq:bo-candidate-set}
\end{align}
The next centre is selected by
\begin{align}
    x_{t+1}
    &=
    \arg\max_{x\in\mathcal C_t}
    a_t(x).
    \label{eq:bo-next-query}
\end{align}
After revealing $\mathcal W(x_{t+1};r)$, the newly observed events and the corresponding integration region are incorporated into the LGCP fit, and the posterior surrogate is updated. This mask prevents repeated acquisition from regions whose point-process events have already been revealed.

\begin{theorem}[Constant BO refitting time under modern INLA]
\label{thm:modern-inla-scaling}
Under the modern INLA formulation in \eqref{eq:modern-latent-field}, ISBO has constant refitting time across BO iterations as the number of revealed events increases, given the SPDE mesh and model specification remain fixed.
\end{theorem}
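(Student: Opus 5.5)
We argue that each term entering the cost of one INLA refit has a size fixed by the mesh and model, not by the number $n_t=|\mathcal A_t|$ of revealed events. The starting point is the modern formulation \eqref{eq:modern-latent-field}. There the latent vector $\mathbf x=(\alpha_0,\mathbf w^\top,\boldsymbol\beta^\top)^\top$ has dimension $1+m+p$, where $m$ is the number of mesh nodes and $p$ the number of covariates, and it excludes the linear predictors. Hence the prior precision $Q(\boldsymbol\theta)$ and its sparsity pattern are independent of $t$. The hyperparameter vector $\boldsymbol\theta$ (Mat\'ern range, standard deviation) also has fixed dimension. So the INLA design over $\boldsymbol\theta$, with $K$ support points and a fixed number of mode-finding steps, does not change with $t$.

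The key step concerns the posterior precision \eqref{eq:modern-lgcp-posterior-precision}, $Q_X=Q+A^\top D A$. Stack $A$ into the rows $A_{\mathrm{ev}}$ for events and $A_{\mathrm{int}}$ for integration points. The event part of the log-likelihood \eqref{eq:lgcp-loglikelihood} is $\sum_i\eta(s_i)=\mathbf 1^\top A_{\mathrm{ev}}\mathbf x$. It is linear in $\mathbf x$, so it contributes zero curvature. Its contribution to the gradient collapses into the single $(1+m+p)$-vector $\mathbf b_t=A_{\mathrm{ev}}^\top\mathbf 1$, which is a sufficient statistic. The curvature comes only from the quadrature term $\sum_k\omega_k\exp\{\mathbf a(\xi_k)^\top\mathbf x\}$. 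Its Hessian is $A_{\mathrm{int}}^\top\operatorname{diag}(\omega_k e^{\eta_k})A_{\mathrm{int}}$. Each row of $A_{\mathrm{int}}$ touches at most $d+1$ mesh nodes plus the intercept and covariates, so $A^\top D A$ has sparsity contained in that of the mesh graph. This gives the following per-$\boldsymbol\theta$ work:
\begin{enumerate}
\item each Newton step needs one sparse Cholesky of an $(1+m+p)$-dimensional matrix with a fixed pattern, whose fill-in and cost are $t$-independent;
\item evaluating gradient and Hessian costs $O(q)$ plus an $O(m)$ update with $\mathbf b_t$;
\item the Laplace approximation $\tilde p(\boldsymbol\theta\mid\mathbf y)$ and the marginal variances \eqref{eq:eta-conditional-moments} via Takahashi recursions depend only on that factorization;
\item the low-rank VB correction acts on a fixed-rank subspace of $\mathbf x$.
\end{enumerate}

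The main obstacle is the dependence of the integration set on $t$. The revealed region $\mathcal R_t$ grows, and the text says the corresponding integration region is incorporated. So we must fix the mesh-induced quadrature: the integration points are the mesh nodes (or a fixed set defined on the mesh), with weights $\omega_k$ supported on $\mathcal R_t$. Then $q\le m$ for all $t$, and the growth of $\mathcal R_t$ only changes weights, some of which are zero, and not dimensions or sparsity. We will state this as part of ``model specification fixed''.

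A second, minor point is the one-time $O(n_t)$ assembly of $\mathbf b_t$. We handle it by noting that $\mathbf b_t$ can be updated incrementally with only the new events, or by observing that this linear pass is negligible relative to the factorizations. It is in any case absent from the iterative inference loop. With the number of optimizer iterations over $\boldsymbol\theta$ treated as bounded, the total refit cost is $O\bigl(K\cdot C_{\mathrm{chol}}(m)+q\bigr)$, which is constant in $n_t$. This proves Theorem~\ref{thm:modern-inla-scaling}.
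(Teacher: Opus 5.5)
Your argument is correct at the level of rigor this informal statement allows. Its skeleton matches the paper's: the latent vector excludes the linear predictors, so $Q_X=Q+A^\top DA$ always has dimension $1+m+n_\beta$ and a mesh-controlled sparsity pattern, and the sparse Cholesky factorizations and solves are therefore independent of $n_t$.

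You depart from the paper in how you treat the remaining $n_t$-dependent work. The paper's argument is purely structural: $A_t$ has a fixed number of columns and the basis functions have local support. That argument holds for any likelihood and any quadrature. However, the paper explicitly concedes that evaluating likelihood contributions and assembling $A_t^\top D_t A_t$ still scale with $n_t$, so ``constant'' refers only to the dominant factorization cost.

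You instead use the specific form of the LGCP likelihood. The event term is linear in $\mathbf x$, so event rows have zero entries in $D$ and enter only through the sufficient statistic $\mathbf b_t=A_{\mathrm{ev}}^\top\mathbf 1$. This removes $n_t$ from the Hessian assembly and from the Newton loop, giving the sharper bound $O\bigl(K\,C_{\mathrm{chol}}(m)+q\bigr)$.

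The sharper bound costs you assumptions the paper's weaker claim does not need:
\begin{itemize}
\item The number of mode-finding and $\boldsymbol\theta$-optimization steps must be bounded. The paper also assumes this, but only implicitly.
\item The quadrature must be anchored to the mesh, so that $q\le m$. You are right to flag the growing integration region, which the paper's proof passes over. Your fix is consistent with the paper's experimental note that spatial fits integrate with domain-mesh weights.
\item The implementation must actually precompute $\mathbf b_t$. The stock \texttt{inlabru}/INLA Poisson representation evaluates all $n_t+q$ rows at every Newton step. In that case you fall back to the paper's concession of a linear pass that is negligible but present.
\end{itemize}

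One small imprecision: every row of $A$ has a nonzero in the intercept and covariate columns. So the pattern of $A^\top DA$ is the mesh graph plus a dense border for $(\alpha_0,\boldsymbol\beta)$, not a subset of the mesh graph itself. That border does not change with $t$, so your conclusion is unaffected.
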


The proof is provided in Appendix~\ref{app:modern-inla-scaling-proof}.

For spatio-temporal data, the same temporal selection mechanism is used to reveal successive time windows. The spatial LGCP is then refitted using the spatial event locations whose event times fall inside the accumulated revealed windows. Thus, temporal BO determines which portions of the spatio-temporal point pattern become available for subsequent spatial intensity inference.

The complete ISBO procedure is summarized in Algorithm~\ref{alg:isbo-summary}. The detailed one-dimensional temporal and two-dimensional spatial implementations are provided in Appendices~\ref{app:ps_code1} and~\ref{app:ps_code2}, respectively.

\begin{algorithm}[t]
\caption{ISBO for point-process data}
\label{alg:isbo-summary}
\KwIn{candidate grid $\mathcal G$, initial event set $\mathcal A_0$, query radius $r$, budget $T$}
\KwOut{query centres $\{x_t\}_{t=1}^{T}$ and updated LGCP posteriors}

\For{$t=0,\ldots,T-1$}{
    Fit the LGCP to $\mathcal A_t$ and obtain
    $\mu_{\eta,t}(x),\sigma_{\eta,t}(x)$ on $\mathcal G$\;

    Select
    \[
        x_{t+1}
        =
        \arg\max_{x\in\mathcal C_t}
        \left\{
        \mu_{\eta,t}(x)+\kappa_{t+1}\sigma_{\eta,t}(x)
        \right\};
    \]

    Reveal events in $\mathcal W(x_{t+1};r)$ and update
    $\mathcal A_{t+1}$ and the masked candidate set $\mathcal C_{t+1}$\;

}
\end{algorithm}

\paragraph{Region-based optimization metrics.}
For synthetic experiments where the true intensity is known, performance is evaluated according to the information contained in each revealed region.
Let
\begin{align}
    \lambda^\star
    &=
    \max_{x\in\mathcal G}\lambda(x),
\end{align}
and define the best intensity contained in the region queried at iteration
$t$ by
\begin{align}
    \lambda_t^{\mathcal W}
    &=
    \max_{x\in\mathcal W(x_t;r)}
    \lambda(x).
    \label{eq:region-query-value}
\end{align}
The instantaneous region regret is
\begin{align}
    r_t
    &=
    \lambda^\star
    -
    \lambda_t^{\mathcal W},
    \label{eq:instantaneous-regret}
\end{align}
and the cumulative regret after $T$ queries is
\begin{align}
    R_T
    &=
    \sum_{t=1}^{T}r_t.
    \label{eq:cumulative-regret}
\end{align}
We further define the simple regret as
\begin{align}
    S_T
    &=
    \lambda^\star
    -
    \max_{1\leq t\leq T}
    \lambda_t^{\mathcal W}.
    \label{eq:simple-regret}
\end{align}

\section{Experiments}

In this section, we evaluate ISBO with spatio-temporal data in 1D temporal, and 2D spatial settings, and compare it against the baseline proposed by \citet{meiBayesianOptGCPMstD}. Details of the data preparation and experimental notes can be found in Appendix~\ref{app:Experiment}.

\subsection{Evaluation on One Dimensional Data}

\subsubsection{1D Temporal Intensity Estimation (Washington, D.C. crime)}\label{sec:1DTint}
We apply ISBO to the May--June 2022 Washington, D.C. firearm-related crime incidents and estimate the temporal intensity using all 342 observed events. Detailed experimental setup and figures are provided in Appendix~\ref{app:1DTint}. Figure~\ref{fig:dc_time_lambda} shows the estimated temporal intensity $\lambda(t)$ over the 60-day horizon. The red curve gives the posterior mean and the gray band the 95\% credible interval. Our end-to-end fit and projection take 2.27s (s for seconds), compared with 35.74s for the baseline. Figure~\ref{fig:dc_time_lambda} (ours) and Figure~\ref{fig:dc_time_lambda_baseline} (baseline) show similar shapes: a peak around day~30, a second rise peaking near day~50, and a drop at the end of June. This agreement on the main structure suggests both methods recover consistent temporal patterns. However the uncertainty differs. The baseline band is tight across most of the domain, which may understate the uncertainty. Moreover, the baseline varies more aggressively and even starts near zero at the beginning, which is inconsistent with the early incidents (zero intensity would imply no expected events). Our SPDE mesh padding and PC priors avoid boundary collapse and yield a smoother, better-calibrated level near the edges.

\begin{figure}[h]
    \centering
    \includegraphics[width=0.45\textwidth]{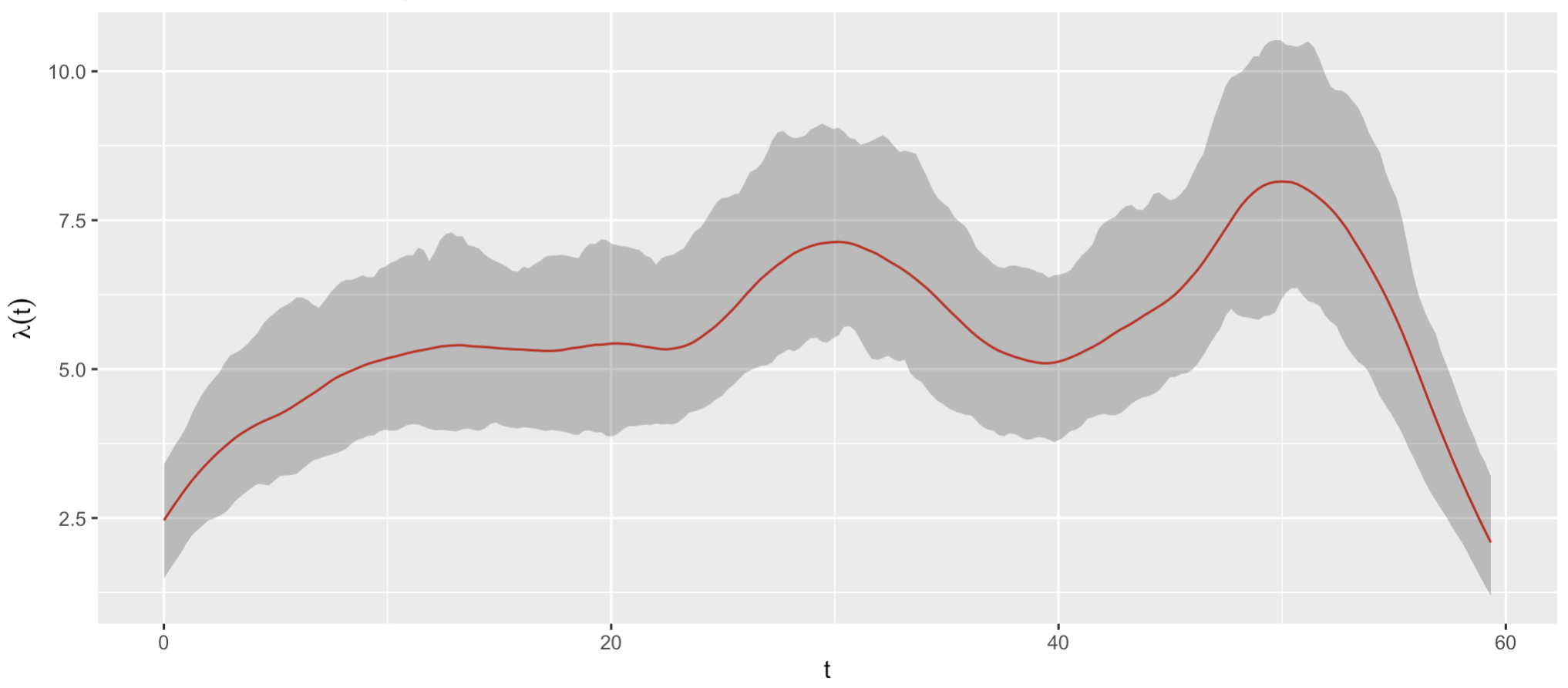}
    \caption{Temporal intensity $\lambda(t)$ (ours)}
    \label{fig:dc_time_lambda}
\end{figure}

\subsubsection{Bayesian Optimization over 1D Synthetic Intensity}
\label{sec:bo_t}

\begin{figure}[t]
    \centering
    \includegraphics[width=\linewidth]
    {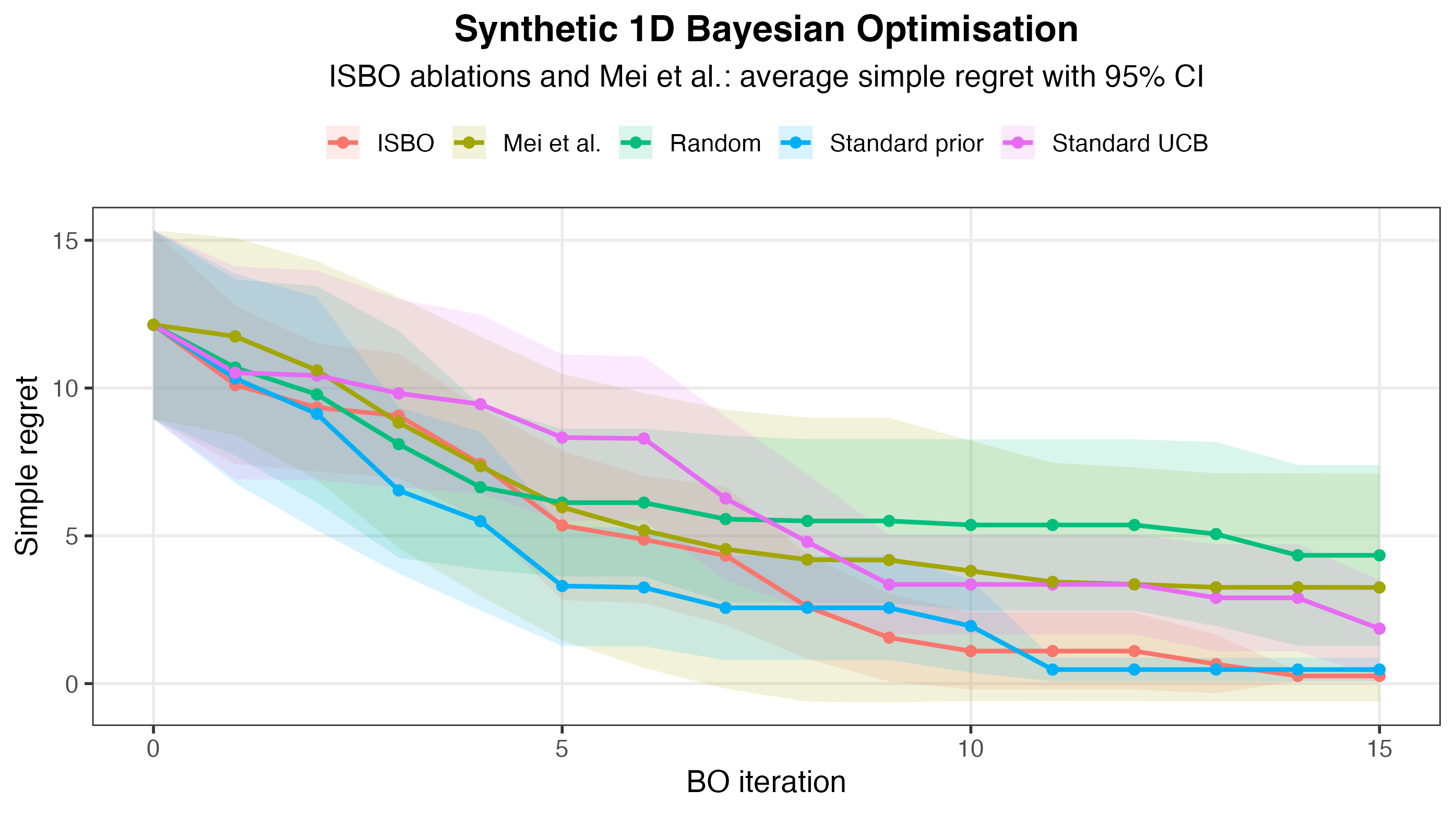}
    \caption{
        Average simple regret for BO over the synthetic
        1D intensity, with 95\% confidence intervals.
    }
    \label{fig:bo_synth_simple_regret}
\end{figure}

We evaluate ISBO on a one-dimensional synthetic point-process dataset with a known ground-truth intensity, allowing optimization performance to be measured directly. The full experimental setup is provided in Appendix~\ref{app:bo_t1d}. We compare the complete ISBO framework with two ablations: \emph{Standard prior}, which replaces the PC prior with a Gaussian prior while keeping the remaining ISBO components unchanged, and \emph{Standard UCB}, which replaces the time-varying exploration schedule with a fixed UCB coefficient. We further include the baseline of \citet{meiBayesianOptGCPMstD} and a random-query policy. All methods share the same initial query locations for each seed. Figure~\ref{fig:bo_synth_simple_regret} reports the average simple regret over 10 seeds with 95\% confidence intervals.

All variants within the ISBO framework achieve lower final simple regret than the baseline of \citet{meiBayesianOptGCPMstD}. The comparison between ISBO and \emph{Standard prior} isolates the effect of PC-prior regularization. Both methods improve rapidly, but ISBO converges more consistently in the later iterations. By iterations 14--15, the mean simple regret of ISBO has reached zero, whereas the \emph{Standard prior} variant, using a Gaussian prior, retains a small positive regret together with a visible confidence interval. This suggests that the PC prior provides more robust regularization of the latent Mat\'ern field under the sparse and sequentially revealed observations encountered during BO.

The comparison with \emph{Standard UCB} isolates the effect of the time-varying exploration schedule. The fixed-UCB variant continues to reduce simple regret throughout the run, but does so more slowly than the complete ISBO method. Its confidence interval is also consistently wider, indicating greater variability across runs. In contrast, the time-varying UCB encourages stronger exploration in early iterations and gradually shifts toward exploitation, leading to faster and more stable localization of the global high-intensity region.

The baseline of \citet{meiBayesianOptGCPMstD} performs substantially worse than all ISBO-based variants and is only moderately better than the random policy in the later iterations. Its confidence interval is also much wider than those of the ISBO variants and is comparable to that of the random policy, indicating substantial run-to-run variability. The ablation study shows that both the PC prior and the time-varying UCB contribute to the improved robustness and peak-localization performance of ISBO.

\subsection{Evaluation on Two Dimensional Data}

\subsubsection{2D Spatial Intensity Estimation (Porto taxi trajectories)}
\label{sec:2DSint}

\begin{figure*}[t]
    \centering

    \begin{minipage}[t]{0.49\textwidth}
        \centering
        \includegraphics[width=\linewidth]
        {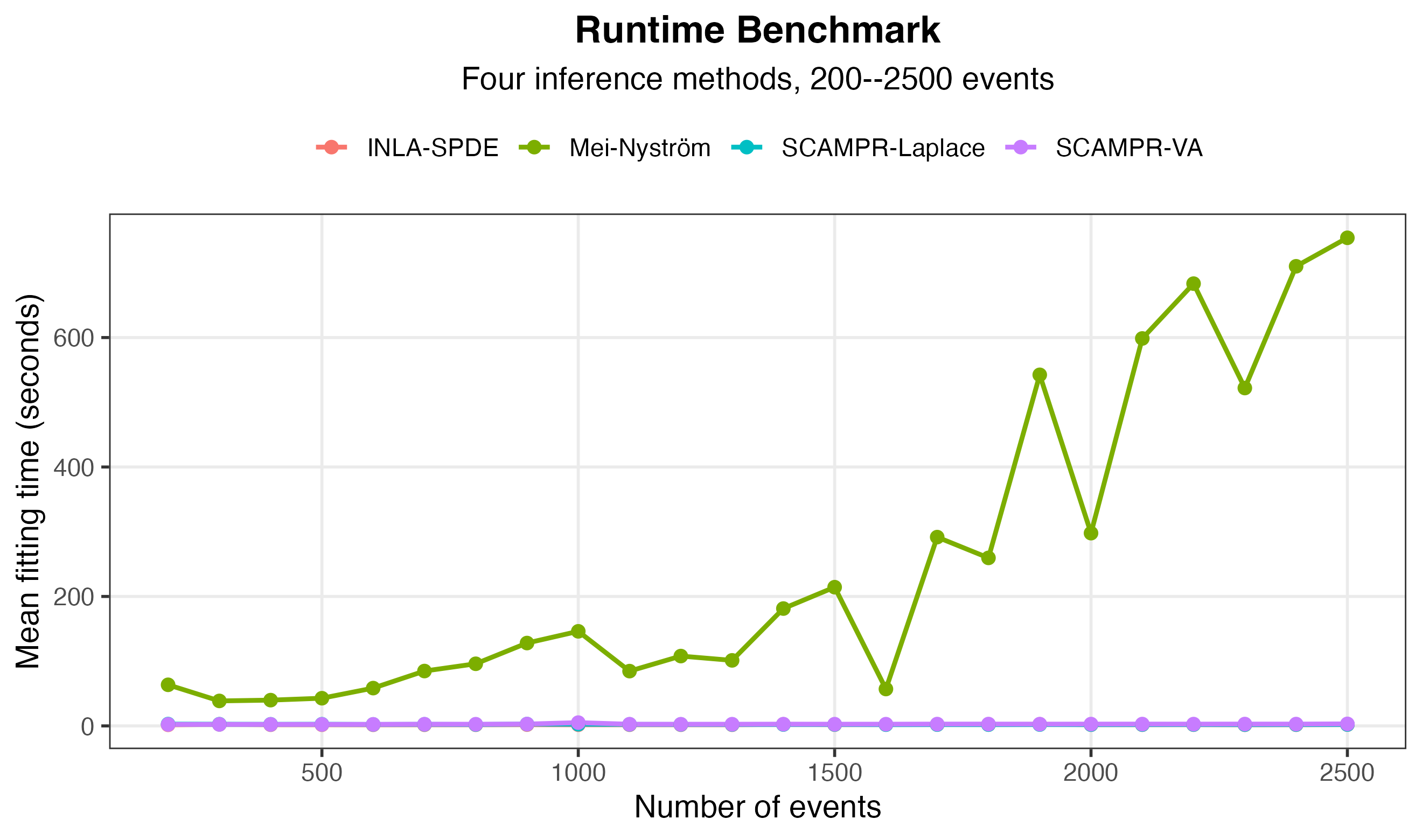}
        \small (a) Runtime benchmark from 200 to 2,500 events.
    \end{minipage}
    \hfill
    \begin{minipage}[t]{0.49\textwidth}
        \centering
        \includegraphics[width=\linewidth]
        {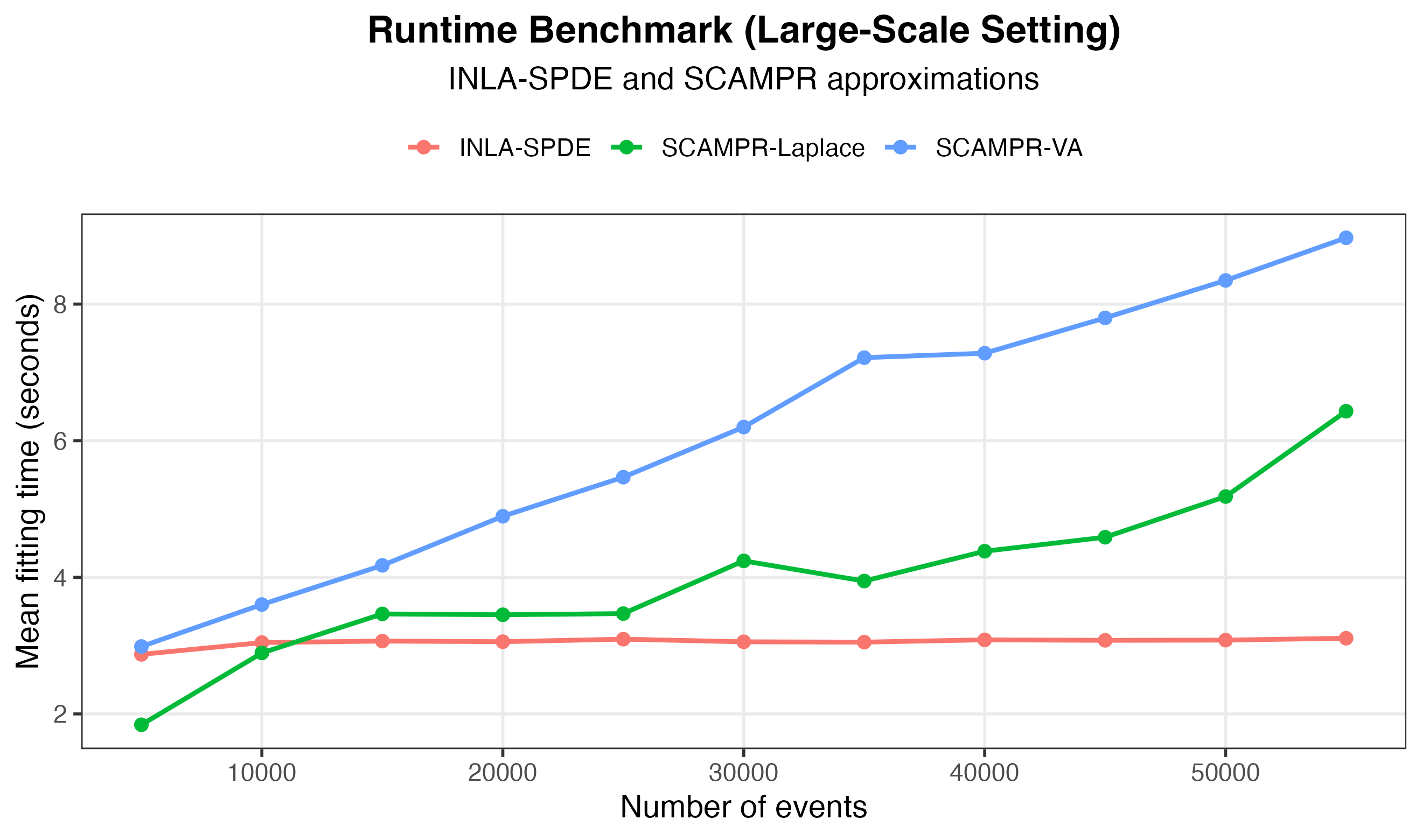}
        \small (b) Large-scale benchmark from 5,000 to 55,000 events.
    \end{minipage}

    \caption{
        Mean fitting time for LGCP inference on local commodity hardware.
        Panel (a) compares INLA--SPDE, the Nystr\"om-based inference procedure
        of \citet{meiBayesianOptGCPMstD}, and the Laplace and variational
        approximations of \citet{dovers2023fast} over 200--2,500 events.
        Panel (b) extends the comparison between INLA--SPDE and the two SCAMPR
        approximations over 5,000 to 55,000 events. 
    }
    \label{fig:runtime_benchmark}
\end{figure*}

We next evaluate spatial LGCP inference on the Porto taxi trajectories, comparing INLA--SPDE with the Nystr\"om-based baseline of \citet{meiBayesianOptGCPMstD} and the SCAMPR approximations of \citet{dovers2023fast}. For the same number of observed events, INLA yields a more detailed and better-resolved intensity surface than the Nystr\"om baseline, as shown in Figures~\ref{fig:taxi_baseline_inten} and \ref{fig:2d_intensity_200_6666}, full experimental details are provided in Appendix~\ref{app:2DSint}. Figure~\ref{fig:runtime_benchmark} compares the computational cost of INLA--SPDE with the Nystr\"om method and the SCAMPR approximations of \citet{dovers2023fast}. In the small-to-moderate regime, Figure~\ref{fig:runtime_benchmark}(a), the Nystr\"om runtime grows rapidly, whereas SCAMPR and INLA--SPDE  are substantially faster, both remains nearly flat.

\citet{dovers2023fast} reported substantial speed-ups over the INLA implementation available at the time, with only negligible loss in statistical efficiency under an appropriate Laplace configuration. Their contribution therefore focused on accelerating LGCP inference while retaining comparable inferential quality with INLA. In contrast, our benchmark with the modern INLA formulation shows that INLA is faster than both SCAMPR approximations over the tested large-scale regime.

The large-scale benchmark in Figure~\ref{fig:runtime_benchmark}(b) shows the same pattern up to 55{,}000 events: the runtime of both SCAMPR variants grows with the number of observations, while INLA--SPDE remains close to constant. These empirical results are consistent with Theorem~\ref{thm:modern-inla-scaling}: under a fixed SPDE mesh and model specification, increasing the number of revealed events does not enlarge the latent system used by modern INLA. ISBO can repeatedly refit the LGCP surrogate at nearly constant computational cost as the revealed dataset grows, providing the scalability required for sequential Bayesian optimization.

\subsubsection{Bayesian Optimization on Spatio- Temporal Data (Washington, D.C. Crime)}
\label{sec:bo_st}

\begin{figure*}[t]
    \centering
    \begin{minipage}[t]{0.49\textwidth}
        \centering
        \includegraphics[width=\linewidth]
        {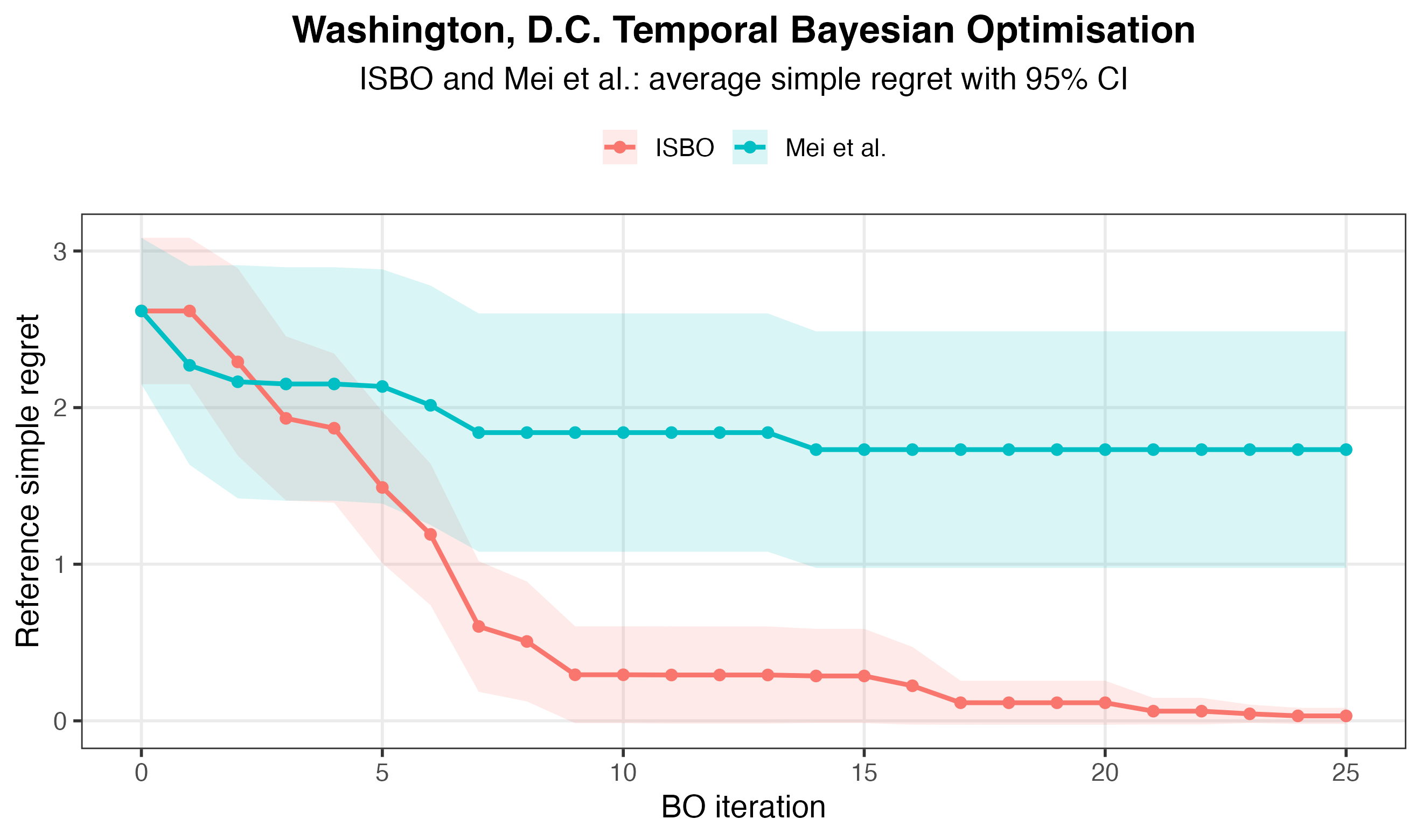}
        \small (a) Average simple regret with 95\% confidence intervals.
    \end{minipage}
    \hfill
    \begin{minipage}[t]{0.49\textwidth}
        \centering
        \includegraphics[width=\linewidth]
        {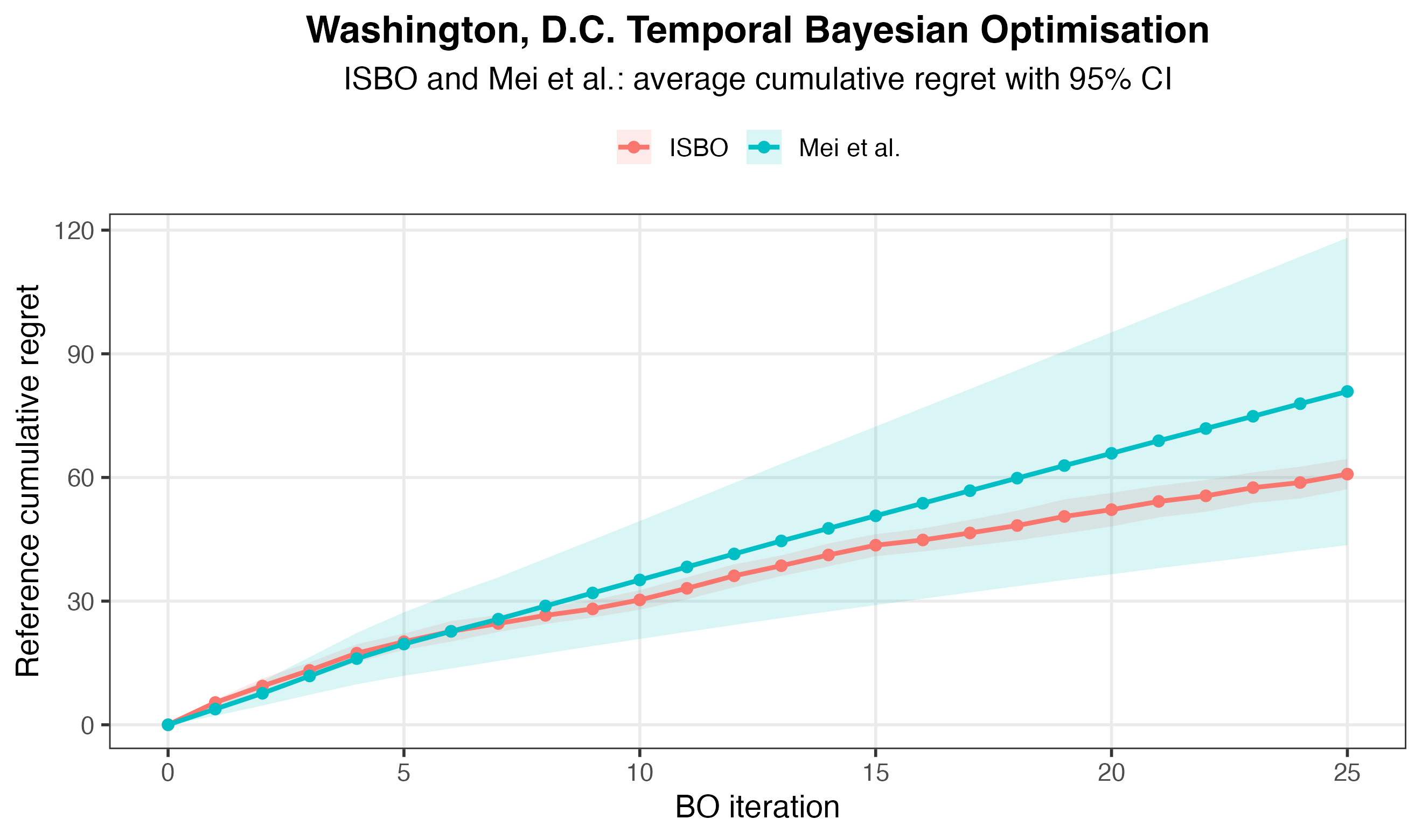}
        \small (b) Average cumulative regret with 95\% confidence intervals.
    \end{minipage}

    \caption{
        Temporal Bayesian optimization on the Washington, D.C.\ crime data,
        comparing ISBO with \citet{meiBayesianOptGCPMstD}. Results are
        averaged across repeated runs, with shaded regions denoting 95\%
        confidence intervals.
    }
    \label{fig:dc_temporal_regret}
\end{figure*}
We finally evaluate ISBO on the Washington, D.C.\ firearm-related crime data in a spatio-temporal setting. To our knowledge, ISBO is the first explicit Cox-process BO framework that directly optimizes a two-dimensional spatial intensity surface. The corresponding spatial BO setup is described in Appendix~\ref{app:bo_spatial}, and the full two-dimensional procedure is summarized in Algorithm~\ref{alg:bo-inla-lgcp-2d}.  In contrast, \citet{meiBayesianOptGCPMstD} performs BO only over temporal intensity and visualizes the revealed spatial events using KDE, rather than fitting and optimizing a spatial Cox-process surrogate.

For a direct comparison, we also benchmark both methods under the same temporal BO setting. Figure~\ref{fig:dc_temporal_regret} shows that ISBO reduces simple regret substantially faster and approaches zero, while the Mei et al.\ baseline plateaus at a higher level. ISBO also achieves lower cumulative regret with markedly narrower confidence intervals in later iterations. Thus, ISBO improves performance in the temporal setting while additionally enabling direct two-dimensional spatial BO.

\section{Discussion}

In this work, we develop ISBO, a scalable Bayesian optimization framework for point-process data based on sequentially revealed event information. Across synthetic and real-world experiments, ISBO consistently improves over existing baselines in optimization performance and computational efficiency, while ablation studies quantify the contribution of its main design components.

From a practical perspective, ISBO enables efficient optimization of latent intensity functions across temporal, spatial, and spatio-temporal settings, while maintaining uncertainty quantification and computational tractability as more events are revealed.

From a methodological perspective, our results show the value of integrating point-process inference directly into the Bayesian optimization loop. By coupling posterior intensity estimation with region-aware acquisition and masking, ISBO provides a coherent framework for adaptive data acquisition under structured event observations. We hope this framework motivates further work on richer acquisition strategies and more general spatio-temporal optimization methods for point-process data.

Beyond the present experiments, ISBO also highlights the potential of INLA, an active and evolving area of research in scalable Bayesian inference, for broader machine-learning problems involving uncertainty quantification and sequential decision making. The framework could be extended to multi-agent reinforcement learning, where scalable Bayesian inference may support hyperparameter tuning, adaptive exploration, and machine decision making \citep{mei2024projection_RL,zhang2024collaborative,erbayat2025lamps,chen2024rgmdt}.

\newpage

\subsection*{AI use statement}

In preparing this manuscript, we used generative AI tools, including ChatGPT, only for limited writing support, such as improving sentence structure, clarity, fluency, and consistency of presentation. We also used AI-assisted features in Visual Studio Code for routine coding support, including code comments and code completion. They made no contribution to any of the core research ideas or scientific conclusions presented in this paper. All core algorithms, experimental pipelines, implementation logic, and scientific contributions were developed by the authors. All AI-assisted text and code were reviewed and revised where necessary. We take full responsibility for the final content of this work.

\bibliography{references}


\clearpage
\appendix
\thispagestyle{empty}

\onecolumn
\aistatstitle{Appendix}

\section{Code and Reproducibility}
\label{app:code}

The code and experimental materials for this paper will be made publicly available upon acceptance.

\section{Preliminaries}
\label{app:pre}

\subsection{Latent Gaussian Models}
\label{app:LGMs}

A latent Gaussian model (LGM) consists of a Gaussian latent field, a conditionally independent likelihood, and hyperpriors for the remaining non-Gaussian parameters. We write $p(\cdot\mid\cdot)$ for conditional densities and define
\begin{align}
    \mathbf{x}
    &=
    \left(
        \alpha_0,\,
        f^{(1)},\ldots,f^{(n_f)},\,
        \beta_1,\ldots,\beta_{n_\beta}
    \right)^\top,
    \label{eq:def-x}
\end{align}
where $\alpha_0$ is an intercept, $f^{(j)}$ denotes a latent random effect, and $\beta_k$ denotes a regression coefficient.

Let $\boldsymbol{\theta}$ denote the non-Gaussian hyperparameters. The latent Gaussian prior is
\begin{align}
    \mathbf{x}\mid\boldsymbol{\theta}
    &\sim
    \mathcal N\!\left(
        \mathbf 0,
        Q(\boldsymbol{\theta})^{-1}
    \right),
    \label{eq:lgm-latent}
\end{align}
where $Q(\boldsymbol{\theta})$ is the precision matrix.

Following the modern INLA formulation, the linear predictors are obtained through
\begin{align}
    \boldsymbol{\eta}
    &=
    A\mathbf{x},
    \label{eq:lgm-eta-modern}
\end{align}
where $A$ is the design or projection matrix. Conditional on
$\boldsymbol{\eta}$, the observations are independent,
\begin{align}
    p(\mathbf y\mid\mathbf x,\boldsymbol{\theta})
    &=
    \prod_{i=1}^{N}
    p\!\left(
        y_i\mid\eta_i,\boldsymbol{\theta}
    \right),
    \label{eq:lgm-like}
\end{align}
with hyperprior
\begin{align}
    \boldsymbol{\theta}
    &\sim
    p(\boldsymbol{\theta}).
    \label{eq:lgm-hyper}
\end{align}

This formulation includes generalized linear and additive models. For example, a linear predictor may be written as
\begin{align}
    \eta_i
    &=
    \alpha_0
    +
    \sum_{k=1}^{n_\beta}
    \beta_k z_{ki}
    +
    \sum_{j=1}^{n_f}
    f^{(j)}(w_{ji}),
    \label{eq:glm-eta}
\end{align}
where $z_{ki}$ are linear covariates and $w_{ji}$ index structured or non-linear effects such as spatial, temporal, i.i.d., or smooth components. The Gaussian structure of these effects, together with sparsity in $Q(\boldsymbol{\theta})$, forms the basis for INLA inference \citep{krainski2018advSPDErINLA}.

\subsection{Gaussian Markov Random Fields}
\label{sec:GMRF}

INLA exploits latent Gaussian Markov random fields (GMRFs) because their conditional independence structure induces sparse precision matrices. In the LGM formulation of \eqref{eq:lgm-latent}, this corresponds to sparsity in $Q(\boldsymbol{\theta})$.

\begin{theorem}[GMRF and Markov Property]
\label{the:gmrf}
A GMRF is a Gaussian vector \(\mathbf{x}=(x_1,\ldots,x_n)^\top\) with covariance \(\boldsymbol{\Sigma}\) and precision \(\mathbf{Q}=\boldsymbol{\Sigma}^{-1}\). The conditional independence are encoded by zeros in \(\mathbf{Q}\) (\(A \perp\!\!\!\perp B\) indicates independence and \(A \perp\!\!\!\perp B \mid C\) conditional independence):
\begin{equation}\label{eq:gmrf-markov}
x_i \,\perp\!\!\!\perp\, x_j \mid \mathbf{x}_{-ij}
\quad\Longleftrightarrow\quad
Q_{ij}=0,
\end{equation}
where \(\mathbf{x}_{-ij}\) denotes all components of \(\mathbf{x}\) except \(x_i\) and \(x_j\).
\end{theorem}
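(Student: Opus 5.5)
The plan is to reduce the statement to a property of the \(2\times 2\) conditional distribution of \((x_i,x_j)\) given \(\mathbf{x}_{-ij}\), and to use the fact that for a jointly Gaussian pair, independence is equivalent to zero covariance. Without loss of generality I take the mean to be \(\boldsymbol{\mu}=\mathbf 0\); otherwise I replace \(\mathbf x\) by \(\mathbf x-\boldsymbol{\mu}\), which changes neither the conditional independence structure nor \(\mathbf Q\). I assume, as is implicit in calling \(\mathbf Q=\boldsymbol\Sigma^{-1}\) a precision matrix, that \(\mathbf Q\) is symmetric positive definite.

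First, I partition the index set into \(a=\{i,j\}\) and \(b=\{1,\ldots,n\}\setminus\{i,j\}\), and write \(\mathbf Q\) in the corresponding blocks \(\mathbf Q_{aa},\mathbf Q_{ab},\mathbf Q_{bb}\). The joint density is proportional to \(\exp\{-\tfrac12 \mathbf x^\top\mathbf Q\mathbf x\}\). The conditional density of \(\mathbf x_a\) given \(\mathbf x_b\) is this same expression viewed as a function of \(\mathbf x_a\), which gives
\begin{align}
p(\mathbf x_a\mid\mathbf x_b)\propto\exp\!\left\{-\tfrac12\mathbf x_a^\top\mathbf Q_{aa}\mathbf x_a-\mathbf x_a^\top\mathbf Q_{ab}\mathbf x_b\right\}.
\end{align}
Completing the square shows that \(\mathbf x_a\mid\mathbf x_b\sim\mathcal N(-\mathbf Q_{aa}^{-1}\mathbf Q_{ab}\mathbf x_b,\ \mathbf Q_{aa}^{-1})\). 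Here \(\mathbf Q_{aa}\) is a principal submatrix of a positive definite matrix, so it is itself positive definite and invertible. The key point is that the conditional covariance \(\mathbf Q_{aa}^{-1}\) does not depend on \(\mathbf x_b\).

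Second, I invert the \(2\times2\) block explicitly:
\begin{align}
\mathbf Q_{aa}^{-1}=\frac{1}{Q_{ii}Q_{jj}-Q_{ij}^2}\begin{pmatrix}Q_{jj}&-Q_{ij}\\-Q_{ij}&Q_{ii}\end{pmatrix}.
\end{align}
The determinant is strictly positive, so the conditional covariance of \(x_i\) and \(x_j\) given \(\mathbf x_{-ij}\) equals zero if and only if \(Q_{ij}=0\). Since \((x_i,x_j)\mid\mathbf x_{-ij}\) is bivariate Gaussian, zero covariance is equivalent to independence, which gives both directions of \eqref{eq:gmrf-markov} at once. As a cross-check for the ``if'' direction, one can also use the factorization criterion. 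The only term in \(\mathbf x^\top\mathbf Q\mathbf x\) that couples \(x_i\) and \(x_j\) is \(2Q_{ij}x_ix_j\), so when \(Q_{ij}=0\) the conditional density factorizes as a function of \(x_i\) times a function of \(x_j\).

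The step requiring the most care is the ``only if'' direction. Arguing through factorization alone would require showing that a term \(e^{-Q_{ij}x_ix_j}\) cannot be split into a product of a function of \(x_i\) and a function of \(x_j\), and that argument is clumsy. I would avoid it by relying on the explicit conditional covariance above, where the equivalence between vanishing covariance and independence for Gaussians does the work. I would also state explicitly that the result holds for every value of \(\mathbf x_{-ij}\), because the conditional covariance does not depend on the conditioning values.
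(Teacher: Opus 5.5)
Your proof is correct and complete. Conditioning on $\mathbf{x}_{-ij}$ leaves $(x_i,x_j)$ bivariate Gaussian with covariance $\mathbf{Q}_{aa}^{-1}$, whose off-diagonal entry $-Q_{ij}/(Q_{ii}Q_{jj}-Q_{ij}^2)$ vanishes exactly when $Q_{ij}=0$, and you rightly note that this covariance does not depend on the conditioning values.

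The paper does not prove this statement; it quotes it as background from the GMRF literature. The natural comparison is therefore with the standard proof (Rue and Held, 2005), which uses the factorization criterion. There, $x_i\perp\!\!\!\perp x_j\mid\mathbf{x}_{-ij}$ holds iff the joint density factors as $f(x_i,\mathbf{x}_{-ij})\,g(x_j,\mathbf{x}_{-ij})$, and the only term of $\mathbf{x}^\top\mathbf{Q}\mathbf{x}$ coupling $x_i$ and $x_j$ is $2Q_{ij}x_ix_j$. That route makes the ``if'' direction immediate, and it is the template that extends to the pairwise and global Markov properties on the graph of nonzeros of $\mathbf{Q}$.

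Its ``only if'' direction needs a short extra step, though less clumsy than you fear. Such a factorization would force the mixed second difference of $\log p$ in $(x_i,x_j)$, taken about $x_i=x_j=0$, to vanish. That difference equals $-Q_{ij}x_ix_j$, so $Q_{ij}=0$.

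Your conditional-covariance route settles both directions at once. It also gives the quantitative by-product $\operatorname{Corr}(x_i,x_j\mid\mathbf{x}_{-ij})=-Q_{ij}/\sqrt{Q_{ii}Q_{jj}}$, which is the usual interpretation of off-diagonal precision entries. The cost is that it leans on the Gaussian-specific fact that zero covariance implies independence, whereas the factorization argument is closer to how Markov properties are handled for general densities.
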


For
\begin{align}
    \mathbf{x}
    &\sim
    \mathcal N(
        \boldsymbol{\mu},
        \boldsymbol{\Sigma}
    ),
    \qquad
    \mathbf{Q}
    =
    \boldsymbol{\Sigma}^{-1},
    \label{eq:gmrf-precision}
\end{align}
the corresponding density is
\begin{align}
    p(\mathbf{x})
    &=
    (2\pi)^{-n/2}
    |\mathbf{Q}|^{1/2}
    \exp\!\left\{
        -\frac{1}{2}
        (\mathbf{x}-\boldsymbol{\mu})^\top
        \mathbf{Q}
        (\mathbf{x}-\boldsymbol{\mu})
    \right\}.
    \label{eq:gmrf-density}
\end{align}

The sparse precision structure enables sparse Cholesky factorization,
\begin{align}
    \mathbf{Q}
    &=
    \mathbf{L}\mathbf{L}^\top,
    \label{eq:gmrf-chol}
\end{align}
where fill-reducing orderings can preserve substantial sparsity in $\mathbf{L}$ \citep{11017730,fattah2026densematricesalikesparse}. For lattice-like structures, typical factorization costs are $\mathcal O(n)$ in one dimension, $\mathcal O(n^{3/2})$ in two dimensions, and $\mathcal O(n^2)$ in three dimensions \citep{rue2005GMRF}. This sparse linear algebra is central to the computational efficiency of INLA.

\subsection{Cox Processes and Log-Gaussian Cox Processes}
\label{sec:lgcp}

A Cox process is a Poisson process with a stochastic intensity. A Log-Gaussian Cox Process (LGCP) specifies
\begin{align}
    \lambda(x)
    &=
    \exp\{\eta(x)\},
    \label{eq:lgcp-general-link}
\end{align}
where $\eta(x)$ is a Gaussian linear predictor and $x$ denotes a generic spatial or temporal location.

Conditional on $\lambda(\cdot)$, the process is an inhomogeneous Poisson process. For a measurable region $\mathcal B$,
\begin{align}
    N(\mathcal B)\mid\lambda
    &\sim
    \operatorname{Poisson}
    \left(
        \int_{\mathcal B}\lambda(x)\,dx
    \right).
\end{align}

For event locations $\{x_i\}_{i=1}^{n}$ observed over a domain $\mathcal D$, the point-process likelihood is
\begin{align}
    p\!\left(
        \{x_i\}_{i=1}^{n}\mid\eta
    \right)
    &\propto
    \exp\!\left\{
        -\int_{\mathcal D}
        \exp\{\eta(x)\}\,dx
    \right\}
    \prod_{i=1}^{n}
    \exp\{\eta(x_i)\}.
    \label{eq:pp-like-general}
\end{align}

For temporal point processes,
\begin{align}
    \mathcal T
    &=
    [t_{\min},t_{\max}],
    \qquad
    L=t_{\max}-t_{\min}>0,
    \label{eq:time-window}
\end{align}
and
\begin{align}
    \lambda(t)
    &=
    \exp\{\eta(t)\}.
    \label{eq:lgcp-time-link}
\end{align}
For spatial point processes, $x=s\in\mathcal D\subset\mathbb R^2$ and
\begin{align}
    \lambda(s)
    &=
    \exp\{\eta(s)\}.
    \label{eq:lgcp-space-link}
\end{align}

\subsection{Laplace Approximation}\label{app:pre_laplace}
We briefly introduce the Laplace approximation here. For a density \(g(x)\) with mode \(\hat x\), a second–order Taylor expansion of \(\log g(x)\) around \(\hat x\) gives
\begin{equation}\label{eq:la-taylor}
\log g(x)\ \approx\ \log g(\hat x)\;-\;\tfrac12\,\bigl[-\ell''(\hat x)\bigr]\,(x-\hat x)^2,
\quad \text{with }\ \ell(x):=\log g(x),
\end{equation}
since \(\ell'(\hat x)=0\) at the mode. Writing the curvature–based variance as
\begin{equation}\label{eq:la-variance}
\hat\sigma^2\ :=\ \bigl[-\ell''(\hat x)\bigr]^{-1},
\end{equation}
we obtain the normal approximation
\begin{equation}\label{eq:la-normal}
g(x)\ \approx\ \text{const}\times \exp\!\Big\{-\tfrac{(x-\hat x)^2}{2\hat\sigma^2}\Big\}
\ \propto\ \mathcal N(x\mid \hat x,\hat\sigma^2).
\end{equation}
The Laplace approximation of \(g(x)\) is approximately normal with mean \(\hat x\) and variance \(\hat\sigma^2\).

\subsection{Spatial Variation and Gaussian Random Fields}
\label{sec:spatial-grf}

For point-referenced spatial data, dependence can be represented through a Gaussian random field (GRF) $U(s)$, where any finite vector
\begin{align}
    \bigl(
        U(s_1),\ldots,U(s_n)
    \bigr)
\end{align}
is jointly Gaussian. A typical spatial linear predictor is
\begin{align}
    \eta(s)
    &=
    \alpha_0
    +
    U(s)
    +
    \mathbf z(s)^\top\boldsymbol{\beta}.
\end{align}

Direct likelihood inference with a GRF builds and factors a \(n\times n\) \emph{dense} covariance matrix, which costs \(\mathcal O(n^3)\) time and \(\mathcal O(n^2)\) memory and does not scale well. A common fix in areal (lattice) data is to model spatial effects with neighborhood structures that lead to sparse precision matrices. In the point–referenced case, we will achieve the same sparsity idea by moving from a covariance view to a precision view: we represent the Matérn field through a SPDE, which yields a Gaussian Markov random field (GMRF) with a sparse precision matrix. This fits naturally inside the INLA framework and provides the needed computational efficiency for our BO setting.

\subsection{The Matérn Covariance}\label{sec:Matérn}

The Matérn family is a standard choice for spatial covariance. For two locations \(s_i\) and \(s_j\) at Euclidean distance \(r=\lVert s_i-s_j\rVert\), the Matérn correlation is
\begin{equation}\label{eq:matern-corr}
\mathrm{Cor}\big(U(s_i),U(s_j)\big)
=\frac{2^{1-\nu}}{\Gamma(\nu)}\,( \kappa r )^{\nu}\,K_{\nu}( \kappa r ),
\end{equation}
where \(\nu>0\) is the smoothness, \(\kappa>0\) is a scale, \(\Gamma(\cdot)\) is the Gamma function, and \(K_{\nu}(\cdot)\) is the modified Bessel function of the second kind. The \emph{covariance} is
\begin{equation}\label{eq:matern-cov}
\mathrm{Cov}\big(U(s_i),U(s_j)\big)=\sigma^2\,\mathrm{Cor}\big(U(s_i),U(s_j)\big),
\end{equation}
with \(\sigma^2\) the marginal variance.

\paragraph{Meaning of the parameters}
\begin{itemize}
\item \(\sigma^2\) sets the overall variability (how large the field values fluctuate around the mean).
\item \(\nu\) controls smoothness, larger \(\nu\) makes the field smoother.
\item \(\kappa\) controls how fast correlation decays with distance. It is often reparametrised by the range \(\rho\), the distance where the correlation is about \(0.13\):
\begin{equation}\label{eq:range-kappa}
\rho \approx \frac{\sqrt{8\nu}}{\kappa}.
\end{equation}
\end{itemize}

\paragraph{PC priors for Matérn hyperparameters}
For the Matérn field we place PC priors on the range and on the marginal standard deviation:
\begin{equation}\label{eq:pc-template}
\Pr\{\rho < \rho_0\}=p_\rho,
\qquad
\Pr\{\sigma > \sigma_0\}=p_\sigma.
\end{equation}
This lets us encode simple beliefs:
\begin{itemize}
\item Range prior: choose \(\rho_0\) as a meaningful fraction of the domain size (for example, half the spatial span), and set \(p_\rho\) to a moderate value (for example, \(0.5\)) to avoid over–short correlation.
\item Standard deviation prior: choose \(\sigma_0\) on the scale of the linear predictor, and set a small tail probability (for example, \(p_\sigma=0.01\)) to discourage very noisy fields.
\end{itemize}

\subsection{The Stochastic Partial Differential Equation Approach}
\label{sec:SPDE}

The SPDE approach of \citet{lindgren2011explicit_GMRF_SPDE} links continuous Mat\'ern Gaussian fields to GMRFs with sparse precision matrices, making Mat\'ern spatial and temporal effects computationally suitable for INLA.

\paragraph{Mat\'ern fields through an SPDE.}
Consider
\begin{align}
    \bigl(\kappa^2-\Delta\bigr)^{\alpha/2}
    \tau U(s)
    &=
    W(s),
    \qquad
    s\in\mathbb R^d,
    \qquad
    \alpha
    =
    \nu+\frac{d}{2},
    \label{eq:spde}
\end{align}
where $\Delta$ is the Laplacian operator, $W(s)$ is Gaussian white noise, $\kappa>0$ and $\tau>0$ are SPDE parameters, and $\nu>0$ controls smoothness. The stationary solution $U(s)$ has a Mat\'ern covariance structure.

\paragraph{Finite-element representation.}
On a bounded domain, the field is approximated as
\begin{align}
    U(s)
    &\approx
    \sum_{k=1}^{m}
    \psi_k(s)w_k,
    \label{eq:fem-expansion}
\end{align}
where $\{\psi_k(s)\}_{k=1}^{m}$ are local piecewise-linear basis functions defined on a mesh and $\mathbf{w}=(w_1,\ldots,w_m)^\top$ are the corresponding Gaussian weights.

The FEM construction gives
\begin{align}
    \mathbf{w}\mid\boldsymbol{\theta}
    &\sim
    \mathcal N\!\left(
        \mathbf 0,
        Q_{\mathrm{SPDE}}(\boldsymbol{\theta})^{-1}
    \right),
\end{align}
where $Q_{\mathrm{SPDE}}(\boldsymbol{\theta})$ is sparse because the basis functions have local support. This connects the continuous Mat\'ern field to the GMRF framework in \cref{sec:GMRF}. The mesh may also be refined locally and adapted to irregular domain boundaries.

For temporal data, the same construction applies with $d=1$ and a one-dimensional finite-element mesh.

\section{Low-rank Correction Using Variational Bayes}\label{app:LRCorVB}

In this appendix, we summarize the modern INLA formulation together with the low-rank variational Bayes (VB) correction proposed by \citet{van2023new}, focusing on the main results required for ISBO.

\subsection{Reformulation of the Latent Gaussian Model}

We consider a latent Gaussian model in which the latent field is defined as
\begin{equation}
\mathbf{x}
= (\alpha_0, f^{(1)}, \ldots, f^{(n_f)}, \beta_1, \ldots, \beta_{n_\beta})^\top,
\end{equation}
collecting the intercept, latent random effects, and regression coefficients, but excluding the linear predictors $\boldsymbol{\eta}$. The latent field is assigned a Gaussian prior
\begin{equation}
\mathbf{x}\mid\boldsymbol{\theta}
\sim \mathcal{N}\bigl(\mathbf{0}, Q(\boldsymbol{\theta})^{-1}\bigr),
\end{equation}
where $Q(\boldsymbol{\theta})$ denotes the sparse precision matrix parameterized by the hyperparameters $\boldsymbol{\theta}$.

In the modern INLA formulation, the linear predictors are no longer treated as explicit components of the latent field. Instead, they are defined as a linear transformation of $\mathbf{x}$,
\begin{equation}
\boldsymbol{\eta} = A\mathbf{x},
\end{equation}
where $A$ is a sparse design matrix linking the latent field to the observations. By removing the $n$ linear predictors from the latent field, this formulation substantially improves both scalability and numerical stability, particularly in data-rich settings.

\subsection{Approximating $\tilde p(\boldsymbol{\theta}\mid\mathbf{y})$}

Under the modern formulation, the joint density of the latent field, the hyperparameters, and the observations is given by

\begin{equation}
p(\mathbf{x},\boldsymbol{\theta}\mid\mathbf{y})
\propto
p(\boldsymbol{\theta})\,
p(\mathbf{x}\mid\boldsymbol{\theta})
\prod_{i=1}^n p\bigl(y_i \mid (A\mathbf{x})_i,\boldsymbol{\theta}\bigr),
\end{equation}

Following the Laplace approximation strategy, the posterior of the hyperparameters is approximated as
\begin{equation}
\tilde p(\boldsymbol{\theta}\mid\mathbf{y})
\propto
\left.
\frac{p(\mathbf{x},\boldsymbol{\theta}\mid\mathbf{y})}
     {p_G(\mathbf{x}\mid\boldsymbol{\theta},\mathbf{y})}
\right|_{\mathbf{x}=\boldsymbol{\mu}(\boldsymbol{\theta})},
\end{equation}
where $p_G(\mathbf{x}\mid\boldsymbol{\theta},\mathbf{y})$ denotes a Gaussian approximation to the conditional posterior $p(\mathbf{x}\mid\boldsymbol{\theta},\mathbf{y})$, evaluated at its mode $\boldsymbol{\mu}(\boldsymbol{\theta})$.

To construct $p_G(\mathbf{x}\mid\boldsymbol{\theta},\mathbf{y})$ for fixed $\boldsymbol{\theta}$, a second-order Taylor expansion of the log-likelihood around the mode $\boldsymbol{\mu}(\boldsymbol{\theta})$ is employed, yielding the Gaussian approximation
\begin{equation}
\mathbf{x}\mid\boldsymbol{\theta},\mathbf{y}
\approx
\mathcal{N}\bigl(\boldsymbol{\mu}(\boldsymbol{\theta}),
Q_X(\boldsymbol{\theta})^{-1}\bigr),
\end{equation}
with precision matrix
\begin{equation}
Q_X(\boldsymbol{\theta})
=
Q(\boldsymbol{\theta})
+
A^\top D(\boldsymbol{\theta}) A,
\end{equation}
where $D(\boldsymbol{\theta})$ is a diagonal matrix with $n$ entries.

\subsection{Approximating $p(x_j\mid\boldsymbol{\theta},\mathbf{y})$}

Conditionally on $\boldsymbol{\theta}$, the Gaussian approximation implies
\begin{equation}
x_j\mid\boldsymbol{\theta},\mathbf{y}
\sim
\mathcal{N}\bigl(\mu_j(\boldsymbol{\theta}),
(Q_X(\boldsymbol{\theta})^{-1})_{jj}\bigr),
\end{equation}
where $\mu_j(\boldsymbol{\theta})$ is the $j$th component of
$\boldsymbol{\mu}(\boldsymbol{\theta})$.
The marginal posterior is obtained by numerical integration over
$\boldsymbol{\theta}$,
\begin{equation}
\tilde p(x_j\mid\mathbf{y})
\approx
\sum_{k=1}^K
p_G(x_j\mid\boldsymbol{\theta}_k,\mathbf{y})\,
\tilde p(\boldsymbol{\theta}_k\mid\mathbf{y})\,\delta_k.
\end{equation}

\subsection{Marginal Posteriors of the Linear Predictors} \label{app:MP_LP}

In contrast to the classic INLA formulation, where the linear predictors are included as explicit components of the latent field and their marginal posteriors are obtained automatically as part of the inference, the modern formulation removes the linear predictors from the latent field. As a consequence, marginal posteriors of the linear predictors are not computed directly and must instead be recovered after approximating $\tilde p(\boldsymbol{\theta}\mid\mathbf{y})$ and $p(\mathbf{x}\mid\boldsymbol{\theta},\mathbf{y})$.

Since $\boldsymbol{\eta}=A\mathbf{x}$, the conditional posterior of $\eta_i$ follows from the Gaussian approximation as
\begin{equation}
\eta_i\mid\boldsymbol{\theta},\mathbf{y}
\sim
\mathcal{N}\bigl(\mu_i(\boldsymbol{\theta}),
\sigma_i^2(\boldsymbol{\theta})\bigr),
\end{equation}

with

\begin{equation}
\mu_i(\boldsymbol{\theta}) = (A\boldsymbol{\mu}(\boldsymbol{\theta}))_i,
\qquad
\sigma_i^2(\boldsymbol{\theta})
= (A Q_X(\boldsymbol{\theta})^{-1} A^\top)_{ii}.
\end{equation}

The diagonal elements of $A Q_X(\boldsymbol{\theta})^{-1} A^\top$ are computed efficiently using the sparse selected inverse of $Q_X(\boldsymbol{\theta})$.

Finally, marginalization over the hyperparameters yields
\begin{equation}
\tilde p(\eta_i\mid\mathbf{y})
\approx
\sum_{k=1}^K
p_G(\eta_i\mid\boldsymbol{\theta}_k,\mathbf{y})\,
\tilde p(\boldsymbol{\theta}_k\mid\mathbf{y})\,\delta_k,
\end{equation}
where $\{\boldsymbol{\theta}_k\}_{k=1}^K$ and $\{\delta_k\}$ denote the numerical integration points and corresponding weights.

\subsection{Low-rank Variational Bayes Correction}

For non-Gaussian likelihoods, the Gaussian approximation $p_G(\mathbf{x}\mid\boldsymbol{\theta},\mathbf{y})$ may yield biased posterior means, even when the covariance structure is well captured. To improve the accuracy of posterior means without reverting to a full nested Laplace approximation, a low-rank variational Bayes (VB) correction to the conditional mean is applied \citep{van2024low}.

The correction is introduced within a restricted Gaussian variational family that retains the precision matrix $Q_X(\boldsymbol{\theta})$ and modifies only the mean. Specifically, the corrected mean is defined as
\begin{equation}
\boldsymbol{\mu}^*(\boldsymbol{\theta})
=
\boldsymbol{\mu}(\boldsymbol{\theta}) + M\boldsymbol{\lambda},
\end{equation}
where $\boldsymbol{\lambda}\in\mathbb{R}^p$, and
$M$ is a propagation matrix constructed from selected columns of
$Q_X(\boldsymbol{\theta})^{-1}$.

The correction vector $\boldsymbol{\lambda}$ is obtained by minimizing a variational objective consisting of the expected negative log-likelihood under the Gaussian distribution $\mathcal{N}(\boldsymbol{\mu}^*(\boldsymbol{\theta}),
Q_X(\boldsymbol{\theta})^{-1})$ together with a quadratic penalty induced by the prior precision $Q(\boldsymbol{\theta})$. This optimization is carried out efficiently using a second-order Taylor approximation around $\boldsymbol{\lambda}=\mathbf{0}$, with expectations of the log-likelihood evaluated via Gauss--Hermite quadrature. The resulting corrected mean achieves accuracy comparable to a full nested Laplace approximation while incurring only a negligible additional computational cost.

The improved mean $\boldsymbol{\mu}^*(\boldsymbol{\theta})$ can then be used in the Gaussian approximation for the linear predictors: 
$\boldsymbol{\theta}$,
\begin{equation}
\eta_j\mid\boldsymbol{\theta},\mathbf{y}
\sim
\mathcal{N}\bigl(\mu_j(\boldsymbol{\theta}),
\sigma_j^2(\boldsymbol{\theta})\bigr),
\qquad
\mu_j(\boldsymbol{\theta}) = (A\boldsymbol{\mu}^*(\boldsymbol{\theta}))_j,
\end{equation}
with $\sigma_j^2(\boldsymbol{\theta})$ defined as in
Section~\ref{app:MP_LP}.
Finally, marginalization over the hyperparameters yields
\begin{equation}
\tilde p(\eta_j\mid\mathbf{y})
\approx
\sum_{k=1}^K
p_G(\eta_j\mid\boldsymbol{\theta}_k,\mathbf{y})\,
\tilde p(\boldsymbol{\theta}_k\mid\mathbf{y})\,\delta_k,
\end{equation}
which provides improved marginal posteriors of the linear predictors while retaining the computational efficiency of the modern INLA framework.

For more details, see \citet{van2023new} and \citet{van2024low}.

\section{Key properties and usage of PC priors}\label{app:PC_prior}

Below we summarise the key properties of PC priors \citep{PCprior} and explain why they are particularly useful in our INLA–SPDE approach BO framework.

\begin{itemize}

\item \textbf{Invariance.} Invariant under one-to-one reparameterization.
\item \textbf{Jeffreys link.} Near the base \(\theta_0\), \(\pi(\theta)\propto \sqrt{I(\theta_0)}\), matching Jeffreys’ rule locally.
\item \textbf{Occam’s razor.} Shrinks toward the base model, only adds complexity when data support it.
\item \textbf{Robustness.} Stable with weak/noisy data, controlled tails reduce overfitting.
\item \textbf{Default use.} These points make PC priors good default hyperpriors in the LGM block \eqref{eq:lgm-hyper}.
\item \textbf{Our stance for BO.} Unlike the baseline that ignores priors, we stress that priors matter in early BO when data are scarce. With INLA we also get \(\tilde{\pi}(\boldsymbol{\theta}\mid \mathbf{y})\) for hyperparameters, which helps practical tuning and checks.
\end{itemize}

\section{Proof of Theorem~\ref{thm:latent-scale-peak}}
\label{app:latent-scale-proof}

\begin{proof}
We prove the two statements separately.

\paragraph{Part I: equivalence of the maximizers.}
Recall that the LGCP intensity is defined by
\begin{align}
    \lambda(x)
    =
    \exp\{\eta(x)\},
    \qquad
    x\in\mathcal G.
\end{align}
The exponential function
\begin{align}
    h(z)=\exp\{z\}
\end{align}
is strictly increasing on $\mathbb R$. Hence, for any
$x_1,x_2\in\mathcal G$,
\begin{align}
    \eta(x_1)>\eta(x_2)
    \quad\Longleftrightarrow\quad
    \exp\{\eta(x_1)\}>
    \exp\{\eta(x_2)\}.
\end{align}
Equivalently,
\begin{align}
    \eta(x_1)>\eta(x_2)
    \quad\Longleftrightarrow\quad
    \lambda(x_1)>\lambda(x_2).
\end{align}

Thus, the exponential transformation preserves the complete ordering of the candidate locations in $\mathcal G$. In particular, if
$x^\star\in\mathcal G$ maximizes $\eta(x)$, then for every
$x\in\mathcal G$,
\begin{align}
    \eta(x^\star)\geq\eta(x),
\end{align}
and therefore
\begin{align}
    \lambda(x^\star)
    =
    \exp\{\eta(x^\star)\}
    \geq
    \exp\{\eta(x)\}
    =
    \lambda(x).
\end{align}
Hence, $x^\star$ also maximizes $\lambda(x)$. The converse follows by the same strict monotonicity argument. Therefore,
\begin{align}
    \arg\max_{x\in\mathcal G}\eta(x)
    =
    \arg\max_{x\in\mathcal G}\lambda(x).
\end{align}

This result also allows for multiple maximizers. If two locations satisfy
\begin{align}
    \eta(x_1)=\eta(x_2),
\end{align}
then
\begin{align}
    \lambda(x_1)
    =
    \exp\{\eta(x_1)\}
    =
    \exp\{\eta(x_2)\}
    =
    \lambda(x_2),
\end{align}
so the entire set of maximizers is preserved by the transformation.

\paragraph{Part II: posterior standard deviation on the intensity scale.}
Suppose that, after conditioning on the events revealed up to BO iteration
$t$,
\begin{align}
    \eta(x)\mid\mathcal A_t
    \approx
    \mathcal N\!\left(
        \mu_{\eta,t}(x),
        \sigma_{\eta,t}^{2}(x)
    \right).
\end{align}
For brevity, write
\begin{align}
    \mu
    =
    \mu_{\eta,t}(x),
    \qquad
    \sigma^2
    =
    \sigma_{\eta,t}^{2}(x).
\end{align}
Since
\begin{align}
    \lambda(x)
    =
    \exp\{\eta(x)\},
\end{align}
the Gaussian approximation for $\eta(x)\mid\mathcal A_t$ induces an approximately log-normal posterior for $\lambda(x)\mid\mathcal A_t$.

To obtain its moments, note that if
\begin{align}
    Z\sim\mathcal N(\mu,\sigma^2),
\end{align}
then the moment-generating function of $Z$ is
\begin{align}
    \mathbb E[\exp\{aZ\}]
    =
    \exp\!\left\{
        a\mu+\frac{a^2\sigma^2}{2}
    \right\}
\end{align}
for any $a\in\mathbb R$. Taking $a=1$ gives
\begin{align}
    \mathbb E[
        \lambda(x)
        \mid
        \mathcal A_t
    ]
    &=
    \mathbb E[
        \exp\{\eta(x)\}
        \mid
        \mathcal A_t
    ]
    \\
    &\approx
    \exp\!\left\{
        \mu
        +
        \frac{\sigma^2}{2}
    \right\}.
    \label{eq:appendix-lambda-mean}
\end{align}

Similarly, taking $a=2$ gives the second moment
\begin{align}
    \mathbb E[
        \lambda(x)^2
        \mid
        \mathcal A_t
    ]
    &=
    \mathbb E[
        \exp\{2\eta(x)\}
        \mid
        \mathcal A_t
    ]
    \\
    &\approx
    \exp\!\left\{
        2\mu
        +
        2\sigma^2
    \right\}.
    \label{eq:appendix-lambda-second-moment}
\end{align}

Therefore,
\begin{align}
    \operatorname{Var}
    \left[
        \lambda(x)
        \mid
        \mathcal A_t
    \right]
    &=
    \mathbb E[
        \lambda(x)^2
        \mid
        \mathcal A_t
    ]
    -
    \left(
        \mathbb E[
            \lambda(x)
            \mid
            \mathcal A_t
        ]
    \right)^2
    \\
    &\approx
    \exp\!\left\{
        2\mu+2\sigma^2
    \right\}
    -
    \exp\!\left\{
        2\mu+\sigma^2
    \right\}
    \\
    &=
    \exp\!\left\{
        2\mu+\sigma^2
    \right\}
    \left(
        \exp\{\sigma^2\}-1
    \right).
    \label{eq:appendix-lambda-var}
\end{align}

Taking the positive square root yields
\begin{align}
    \operatorname{sd}
    \left[
        \lambda(x)
        \mid
        \mathcal A_t
    \right]
    &\approx
    \exp\!\left\{
        \mu+\frac{\sigma^2}{2}
    \right\}
    \sqrt{
        \exp\{\sigma^2\}-1
    }.
\end{align}
Substituting
$\mu=\mu_{\eta,t}(x)$ and
$\sigma^2=\sigma_{\eta,t}^{2}(x)$ gives
\begin{align}
    \operatorname{sd}
    \left[
        \lambda(x)
        \mid
        \mathcal A_t
    \right]
    &\approx
    \exp\!\left\{
        \mu_{\eta,t}(x)
        +
        \frac{1}{2}
        \sigma_{\eta,t}^{2}(x)
    \right\}
    \sqrt{
        \exp\!\left\{
            \sigma_{\eta,t}^{2}(x)
        \right\}
        -1
    },
\end{align}
which is exactly the expression in \eqref{eq:lambda-posterior-sd} under the Gaussian posterior approximation.

The first part shows that optimizing the latent log-intensity and optimizing the intensity target the same locations. The second part shows that the exponential transformation maps Gaussian uncertainty in $\eta(x)$ to log-normal uncertainty in $\lambda(x)$, whose scale increases nonlinearly with $\sigma_{\eta,t}^{2}(x)$. This motivates defining the UCB-style acquisition directly on the latent Gaussian scale, as in \eqref{eq:isbo-acquisition}.
\end{proof}

\section{Proof of Theorem~\ref{thm:modern-inla-scaling}}
\label{app:modern-inla-scaling-proof}

\begin{proof}
Let $n_t$ denote the number of events revealed by ISBO up to iteration $t$. Under the modern INLA formulation, the latent field is defined in
\eqref{eq:modern-latent-field} as
\begin{align}
    \mathbf{x}
    =
    \left(
        \alpha_0,\,
        \mathbf w^\top,\,
        \boldsymbol{\beta}^\top
    \right)^\top .
\end{align}
Let $m$ denote the number of SPDE mesh coefficients and $n_\beta$ the number of regression coefficients. Since both the SPDE mesh and the model specification are fixed throughout BO,
\begin{align}
    p_x
    :=
    \dim(\mathbf{x})
    =
    1+m+n_\beta
\end{align}
is constant and, crucially, does not depend on $n_t$.

At iteration $t$, the linear predictors are represented by
\begin{align}
    \boldsymbol{\eta}_t
    =
    A_t\mathbf{x},
\end{align}
where the number of rows of $A_t$ may increase as additional events are revealed. However, the number of columns of $A_t$ is always $p_x$, because each row projects onto the same fixed latent field.

For fixed $\boldsymbol{\theta}$, the Gaussian approximation used by modern INLA has precision
\begin{align}
    Q_X^{(t)}(\boldsymbol{\theta})
    =
    Q(\boldsymbol{\theta})
    +
    A_t^\top
    D_t(\boldsymbol{\theta})
    A_t .
    \label{eq:app-modern-QXt}
\end{align}
Although $A_t$ and $D_t(\boldsymbol{\theta})$ depend on the observations available at iteration $t$, the matrix
\begin{align}
    A_t^\top
    D_t(\boldsymbol{\theta})
    A_t
\end{align}
always has dimension $p_x\times p_x$. Therefore, $Q_X^{(t)}(\boldsymbol{\theta})$ has the same dimension at every BO iteration, independently of $n_t$. Moreover, the sparsity structure is controlled by the fixed SPDE mesh. The finite-element basis functions have local support, so each observation location involves only the basis functions associated with its local mesh element. Hence, adding further observations changes the numerical entries of $A_t^\top D_t A_t$, but does not introduce an observation-sized latent block. The sparse precision system solved by INLA therefore remains defined on the same mesh-based graph throughout the BO procedure. This differs from the old formulations in which every linear predictor $\eta_i$ is included explicitly in the latent field. In such a formulation, the latent dimension grows with the number of observations, whereas in the modern formulation the predictors are recovered through $\boldsymbol{\eta}_t=A_t\mathbf{x}$ without enlarging $\mathbf{x}$.

Thus the sparse Cholesky factorization and associated linear solves, which constitute the dominant computational operations in the modern INLA fit, are performed on a matrix of fixed dimension and fixed mesh-controlled sparsity structure at every BO iteration. Their computational complexity is therefore independent of the increasing number of revealed events.

We conclude that, under a fixed SPDE mesh and fixed model specification, the dominant INLA refitting cost remains constant across ISBO iterations as the revealed dataset grows. Operations such as evaluating individual likelihood contributions and assembling $A_t^\top D_t A_t$ may still depend on $n_t$, but these do not increase the dimension of the principal sparse system. This establishes the stated constant-refitting property of modern INLA in the sense relevant to ISBO.
\end{proof}

\section{Experiment Setups}\label{app:Experiment}

\subsection{Data Preparations}
In this paper, we use one synthetic dataset and two real-world datasets: the 2022 Washington, D.C. crime incidents \citep{dcgov2022crime} and the 2013-2014 Porto taxi trajectories \citep{kaggle_taxi}. To enable a fair comparison, we follow the baseline’s data construction and preprocessing whenever possible. Since the original experimental data were unavailable, our datasets differ slightly but match the overall distribution, so results are unaffected. We also cloned the baseline’s public code and reran all experiments using our data for comparison. Below, we briefly describe how we construct and preprocess these datasets.

\paragraph{Synthetic 1D data.}
The baseline did not release data and generated sequences without a fixed seed, so we recreated its setup. We generate a reproducible one-dimensional inhomogeneous Poisson process on the interval $[0,100]$ using a fixed random seed. The ground-truth intensity is constructed directly as a smooth multi-modal function consisting of a positive baseline, three Gaussian-shaped peaks, and a small sinusoidal component, with one dominant peak. We evaluate this intensity on a uniform grid of 1001 points and retain the resulting curve as the known ground truth for subsequent optimisation and regret evaluation. Point-process events are then simulated using the thinning algorithm \citep{thinning_lewis1979simulation}: candidate events are first drawn from a homogeneous Poisson process with dominating rate $\lambda_{\max}=\max_t \lambda(t)$ and uniformly distributed over the observation interval, after which a candidate at time $t$ is accepted with probability $\lambda(t)/\lambda_{\max}$. With the fixed seed used in our experiments, this procedure yields 596 point-process events. The construction provides both the event observations used by the optimisation algorithms and an exact reference intensity against which peak recovery and regret can be evaluated.

\paragraph{Washington, D.C. crime (2022).}
We use the publicly available 2022 spatio-temporal crime dataset from Washington, D.C., USA. We filter to firearm-related violent incidents, yielding 342 events (the baseline reports 343). For our experiments we take the May–June subset. We normalize time to continuous days since May 1 (including fractional days), then sort by time and keep latitude/longitude aligned. Since the true underlying intensity is unavailable for this real-world dataset, we construct a reference intensity for quantitative evaluation, details are provided in Appendix~\ref{app:bo_spatial}.

\paragraph{Porto taxi trajectories (2013--2014).}
We use the Porto taxi trajectory dataset from 2013--2014. After removing anomalous records, we retain only trips with \textbf{\texttt{CALL\_TYPE}~C} (A = central dispatch, B = stand pickup, C = random street hail). For each trajectory, we take the first GPS point as the trip starting point and filter the points within the spatial window $[41.15,\,41.18]\times[-8.63,\,-8.60]$. Timestamps are converted to continuous hours since \texttt{2013-07-01 00:00 UTC}. The modeling table contains origin latitude/longitude and the associated time vector, plus an auxiliary placeholder ``intensity'' column for downstream models. The raw dataset contains 106{,}732 records. The baseline paper uses only about two hundred trips, but the exact subset is not available. We therefore sample the required number of trips from the full data using a fixed random seed according to different experiment setup.

\subsection {Experimental Note on IPS usage.}
In our experiments, we did \emph{not} construct a separate spatial IPS, spatial LGCP fits always
integrate over the entire $\mathcal D$ using the domain–mesh weights (i.e., \eqref{eq:ips-space-default}). While along the temporal axis we \emph{did} build a 1D integration mesh (see Figure~\ref{fig:DC_t_mesh}) $\mathcal M^{t}_{\mathrm{ips}}$ with a $5\%$ padding around $[t_{\min},t_{\max}]$. We do this for a simple reason. In our data, edge effects are larger in time than in space. After the usual padding, spatial edge effects are small. So we set an IPS only on $\mathcal T$. This choice depends purely on the specific task: if a problem shows strong spatial edge effects, one can add a spatial IPS as well.

\subsection{Experimental Note on Query Radius.}
The query radius \(r\) controls the size of the region revealed at each BO step and therefore must be chosen carefully. If \(r\) is too large relative to the domain, a single query may reveal most or even all of the data, making the optimization problem effectively trivial. Such a setting is neither informative for evaluating sequential Bayesian optimization nor realistic in applications, where each query typically provides only local information. We therefore restrict \(r\) to a moderate range so that each iteration reveals only a limited portion of the domain, preserving the intended sequential decision-making setting. In practice, \(r\) should be interpreted as the spatial or temporal extent of a feasible observation window and selected according to the resolution and scale of the application.

\subsection{Experimental Note on Computing Resources}

All experiments, including ISBO and the competing baselines, are executed locally on a MacBook Air with an Apple M2 processor and 16\,GB RAM. We do not use HPC resources for any experiment. Our R implementation uses R~4.5.1 with \texttt{INLA}~25.06.07 and \texttt{inlabru}~2.13.0, while the Python baselines are also executed locally on the same machine. This choice is motivated by the intended use of ISBO in resource-constrained settings, where access to HPC infrastructure or even a reliable Internet connection may not be available. Such constraints arise, for example, in unmanned aerial vehicle (UAV) applications and related mobile sensing platforms \citep{UAV1gupta2015survey,UAV2fahlstrom2022introduction,UAV3nex2014uav}. Running all methods on commodity local hardware therefore provides a stricter and more practically relevant evaluation of computational scalability. In particular, our aim is to demonstrate that ISBO is sufficiently fast and memory-efficient for repeated Bayesian optimization without relying on specialized computing infrastructure.

\section{1D Temporal Intensity Estimation (Washington, D.C. crime)}\label{app:1DTint}
We map time to a continuous 60-day axis, i.e., $t\in[0,60]$ with $t=0$ on May~1st and $t=60$ on June~30th. Figure~\ref{fig:DC_data_t} illustrates the data: black vertical ticks along the x-axis mark each incident time, and a histogram with 20 equal-width bins shows the counts per bin.

To fit the LGCP on the 1D time axis in a stable and efficient way, we need to build a 1D domain mesh $\mathcal M_{\mathrm{dom}}^{t}$ over the observed window $\mathcal T=[t_{\min},t_{\max}]$ and discretize the latent field with piecewise-linear basis functions centered at the mesh nodes, yielding a sparse GMRF for fast INLA inference. Concretely, we pad $\mathcal T$ by 10\% on each side to define $\mathcal M_{\mathrm{dom}}^{t}$ (hereafter \texttt{mesh1D}), and construct a separate integration mesh $\mathcal M_{\mathrm{ips}}^{t}$ (hereafter \texttt{ips}) with a 5\% padding, numerical quadrature is performed only inside $\mathcal M_{\mathrm{ips}}^{t}$ to mitigate boundary effects. We use 300 evenly spaced nodes to balance resolution and computational cost. Figure~\ref{fig:DC_t_mesh} visualizes the padded meshes and sampling structure: nodes (bottom), observed events (middle), and integration points (top).

\begin{figure}[h]
    \centering
    \includegraphics[width=0.5\textwidth]{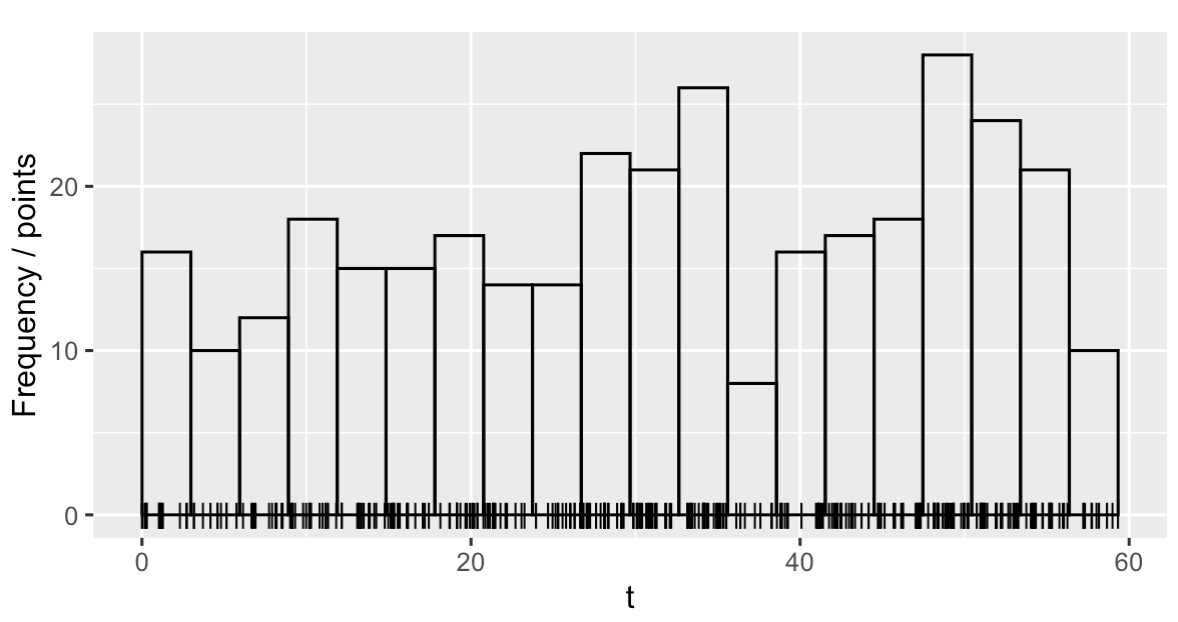}
    \caption{Observed firearm-related crime incidents time at Washington, D.C. between May and June 2022}
    \label{fig:DC_data_t}
\end{figure}
\begin{figure}[h]
    \centering
    \includegraphics[width=0.5\textwidth]{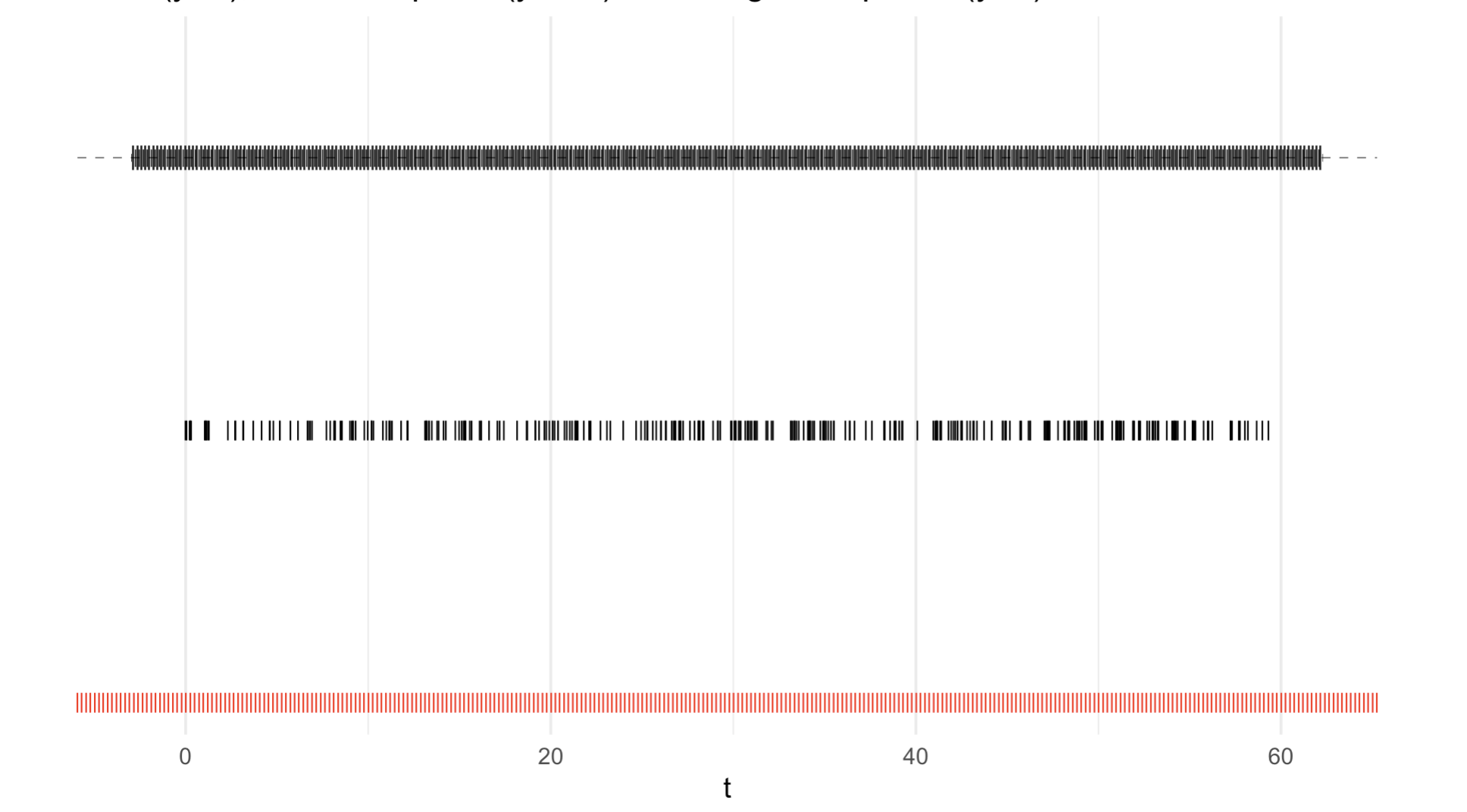}
    \caption{1D time mesh, integration points, and events}
    \label{fig:DC_t_mesh}
\end{figure}

To obtain a smoother and more stable temporal intensity, we place PC priors on the Matérn field defined on the 1D mesh, favouring longer correlation ranges and smaller marginal variability on the linear-predictor scale. Concretely, let \(L=t_{\max}-t_{\min}\) denote the observed span, we set
\begin{equation}
\Pr\{\rho < \tfrac{1}{2}L\}=0.5,
\qquad
\Pr\{\sigma > 0.5\}=0.01,
\end{equation}
where \(\rho\) is the Matérn range and \(\sigma\) is the marginal standard deviation of the latent field on the linear-predictor scale. This choice discourages spurious short-scale oscillations and local noise while retaining sensitivity to sustained structural changes.

We fit the LGCP in \texttt{inlabru} with a linear predictor comprising an intercept and a Matérn field, and perform inference via INLA using the empirical Bayes strategy to balance accuracy and cost. The fit returns posterior summaries for the latent predictor \(\eta(t)\) and, via marginal transformation, for the intensity \(\lambda(t)=\exp\{\eta(t)\}\) (e.g., using \texttt{inla.tmarginal}), as well as marginal posteriors for the hyperparameters, shown in Figure~\ref{fig:spde_hyperposteriors_row}.

\begin{figure}[h]
    \centering
    \includegraphics[width=0.8\textwidth]{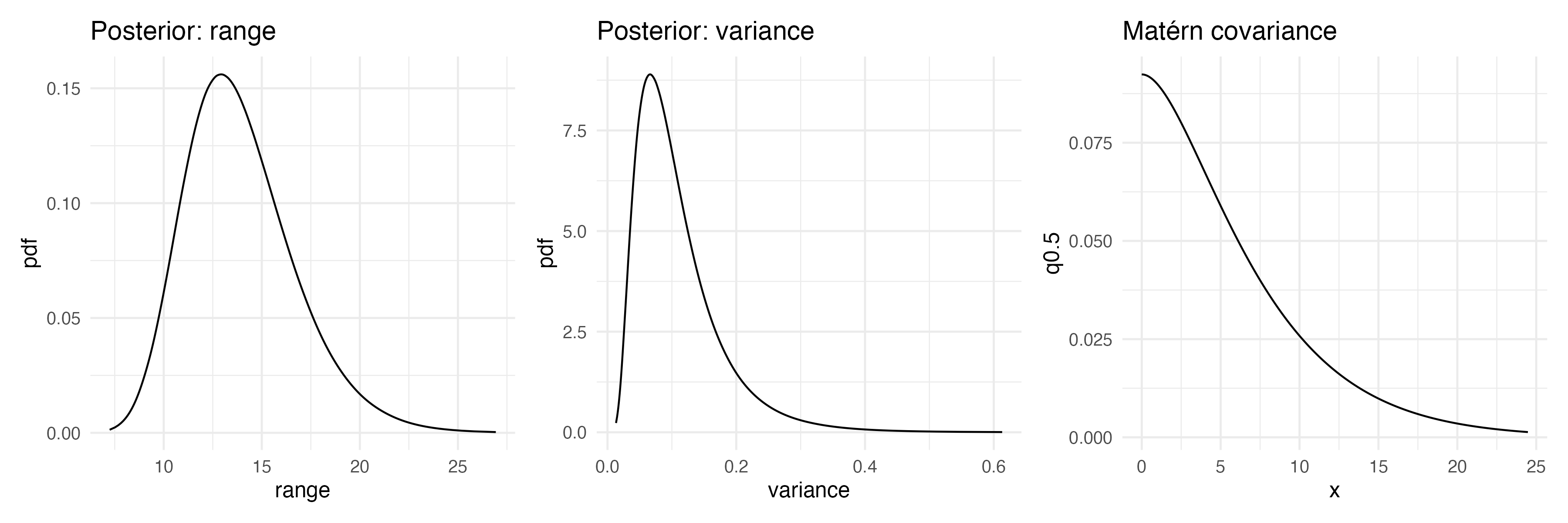}
    \caption{Posteriors for the hyperparameters}
    \label{fig:spde_hyperposteriors_row}
\end{figure}

The range posterior is unimodal and peaks around 14--16 days, with most mass between roughly 10 and 20 days. This rules out very short scales and points to a smooth latent field at the two–week level. The variance posterior is right–skewed, with a mode near 0.1 and most mass well below 0.25, suggesting moderate amplitude rather than sharp spikes. The Matérn covariance curve decays smoothly: correlation is still noticeable up to about 8--10 days and is close to zero by 25 days. The posterior supports a smooth latent field with moderate variance, which matches the two–month crime data and our PC–prior preference for longer ranges and smaller variance. This level of smoothness reduces overfitting to isolated events while keeping sensitivity to sustained changes.

\begin{figure}[h]
    \centering
    \includegraphics[width=0.7\textwidth]{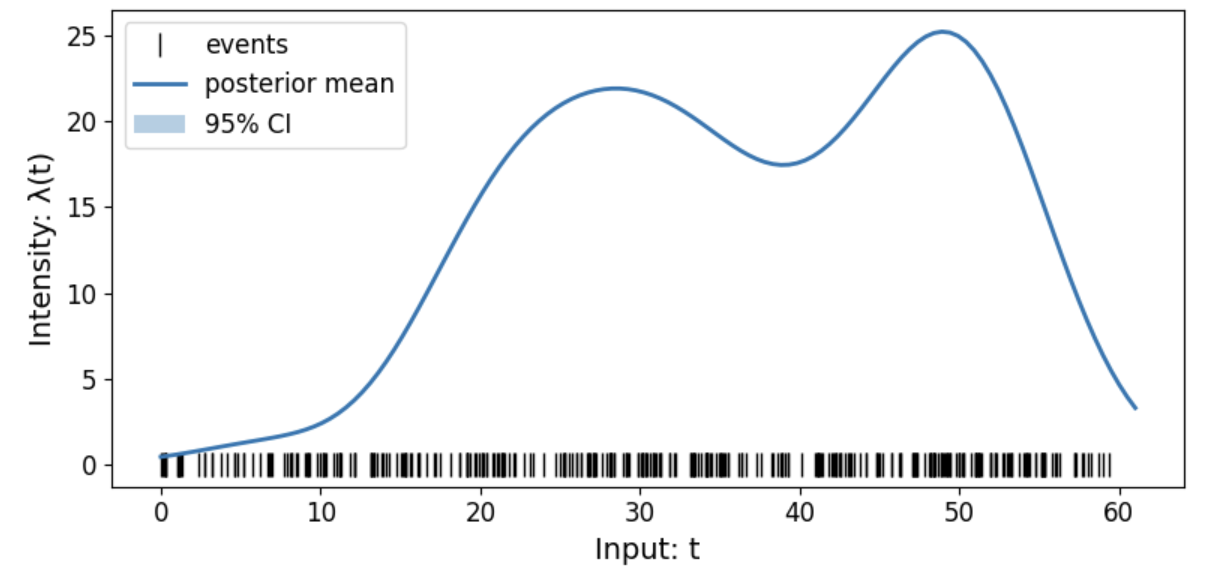}
    \caption{Temporal intensity $\lambda(t)$ (baseline)}
    \label{fig:dc_time_lambda_baseline}
\end{figure}

\section{Bayesian Optimization over 1D Synthetic Intensity}
\label{app:bo_t1d}

We use the Bayesian optimization procedure described in Section~\ref{sec:bo-over-intensity} on the one-dimensional synthetic point-process dataset. The ground-truth intensity is known on the evaluation grid, allowing the region-based simple regret in \eqref{eq:simple-regret} to be computed directly.

\paragraph{LGCP specification.}
The temporal domain is
\(
\mathcal T=[t_{\min},t_{\max}]
\)
with length
\(
L=t_{\max}-t_{\min}.
\)
We construct a one-dimensional SPDE domain mesh with \(0.3L\) padding on both sides of \(\mathcal T\), and a separate integration mesh with \(0.1L\) padding. Both meshes contain 200 equally spaced nodes. The LGCP consists of an intercept and a one-dimensional Mat\'ern SPDE field. Numerical integration is implemented using \texttt{fmesher::fm\_int}, and all posterior predictions used by BO are evaluated on the common candidate grid \(\mathcal G\). Inference is performed with \texttt{inlabru::lgcp} using the empirical Bayes integration strategy.

For ISBO, the Mat\'ern field uses the PC-prior specification
\begin{align}
    \Pr\{\rho<L/10\}
    &=
    0.3,
    &
    \Pr\{\sigma>0.2\}
    &=
    0.5.
    \label{eq:bo1d-pc-prior}
\end{align}
The \emph{Standard prior} ablation replaces this with \texttt{inla.spde2.matern}, using nominal range \(L/10\), nominal variance \(0.2^2\), and \texttt{theta.prior.prec}\(=1\). All other components are kept unchanged.

\paragraph{BO settings.}
The acquisition function follows
\eqref{eq:isbo-acquisition} and the decreasing schedule in
\eqref{eq:kappa-schedule}. We use
\begin{align}
    \kappa_{\mathrm{start}}
    &=
    9,
    &
    \kappa_{\mathrm{final}}
    &=
    4,
    &
    T
    &=
    15.
\end{align}
The \emph{Standard UCB} ablation instead fixes
\(
\kappa_t=6
\)
for all iterations. Acquisition is performed directly on the latent
\(\eta\)-scale. 

The query radius in
\eqref{eq:bo-query-window} is set to
\begin{align}
    r=1.
\end{align}
Previously revealed regions and previous query centres are excluded from subsequent UCB maximization according to
\eqref{eq:revealed-region}--\eqref{eq:bo-next-query}. We do not apply
additional mask dilation beyond the queried region. 

\paragraph{Initialization and repetitions.}
For each experimental seed, three distinct initial centres are sampled from the evaluation grid, excluding
\begin{align}
    [45,55],
\end{align}
so that initialization does not directly reveal the dominant peak region. Within each seed, the same three initial centres are used for all compared methods. We run ten seeds,
\begin{align}
    13,\,23,\,33,\,43,\,53,\,63,\,73,\,83,\,93,\,103.
\end{align}
The posterior obtained from these shared initial regions is recorded as iteration \(0\), providing a common starting point for the regret curves.

\paragraph{Ablation study.}
The four configurations are
\begin{align}
    \text{ISBO}
    &:~
    \text{PC prior}
    +
    \text{time-varying UCB},
    \\
    \text{Standard prior}
    &:~
    \text{standard Mat\'ern prior}
    +
    \text{time-varying UCB},
    \\
    \text{Standard UCB}
    &:~
    \text{PC prior}
    +
    \text{fixed UCB},
    \\
    \text{Random}
    &:~
    \text{PC prior}
    +
    \text{random querying}.
\end{align}
For the random policy, query centres are sampled from the same evaluation grid without replacement. Hence, \emph{Standard prior} isolates the effect of the PC prior, while \emph{Standard UCB} isolates the effect of the time-varying exploration schedule.

\paragraph{Stepwise visualization.}
Figure~\ref{fig:bo_synth_steps} provides a qualitative illustration of the sequential BO process of ISBO on the synthetic one-dimensional intensity. At the initial stage, the posterior mean remains diffuse and the acquisition function places substantial weight on unexplored regions. By iteration 5, the main peaks of the underlying intensity have already been identified and the posterior uncertainty is visibly reduced around the revealed regions. By iteration 13, the posterior mean closely tracks the ground-truth intensity and the uncertainty band has contracted substantially, indicating that the surrogate has largely stabilized. This visualization is consistent with the simple-regret results in the main text: ISBO first explores broadly, then progressively concentrates on the high-intensity regions.

\begin{figure*}[t]
  \centering
  \begin{subfigure}[t]{0.31\textwidth}
    \centering
    \includegraphics[width=\linewidth]{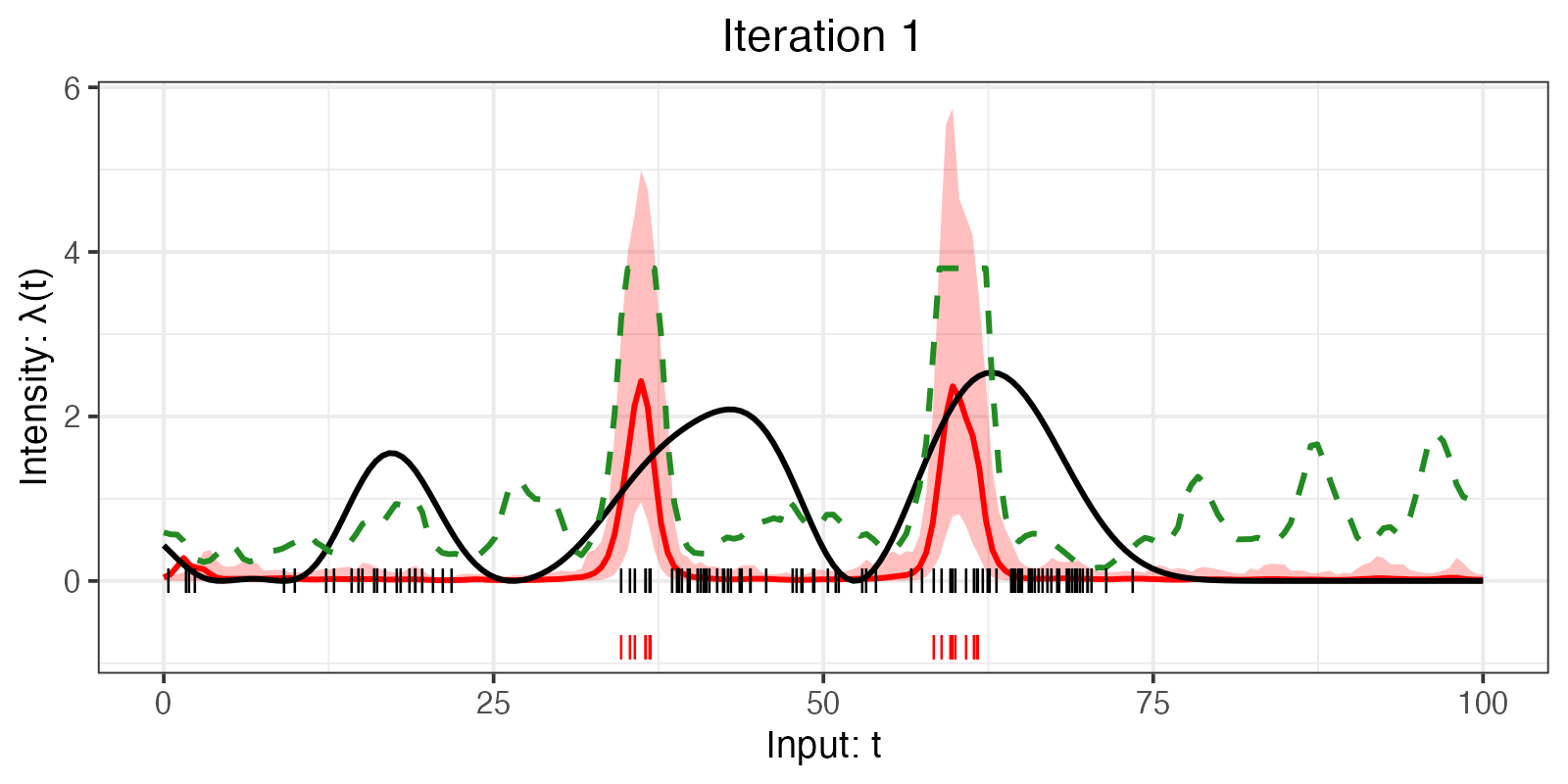}
    \caption{Iteration 1}
  \end{subfigure}
  \hfill
  \begin{subfigure}[t]{0.31\textwidth}
    \centering
    \includegraphics[width=\linewidth]{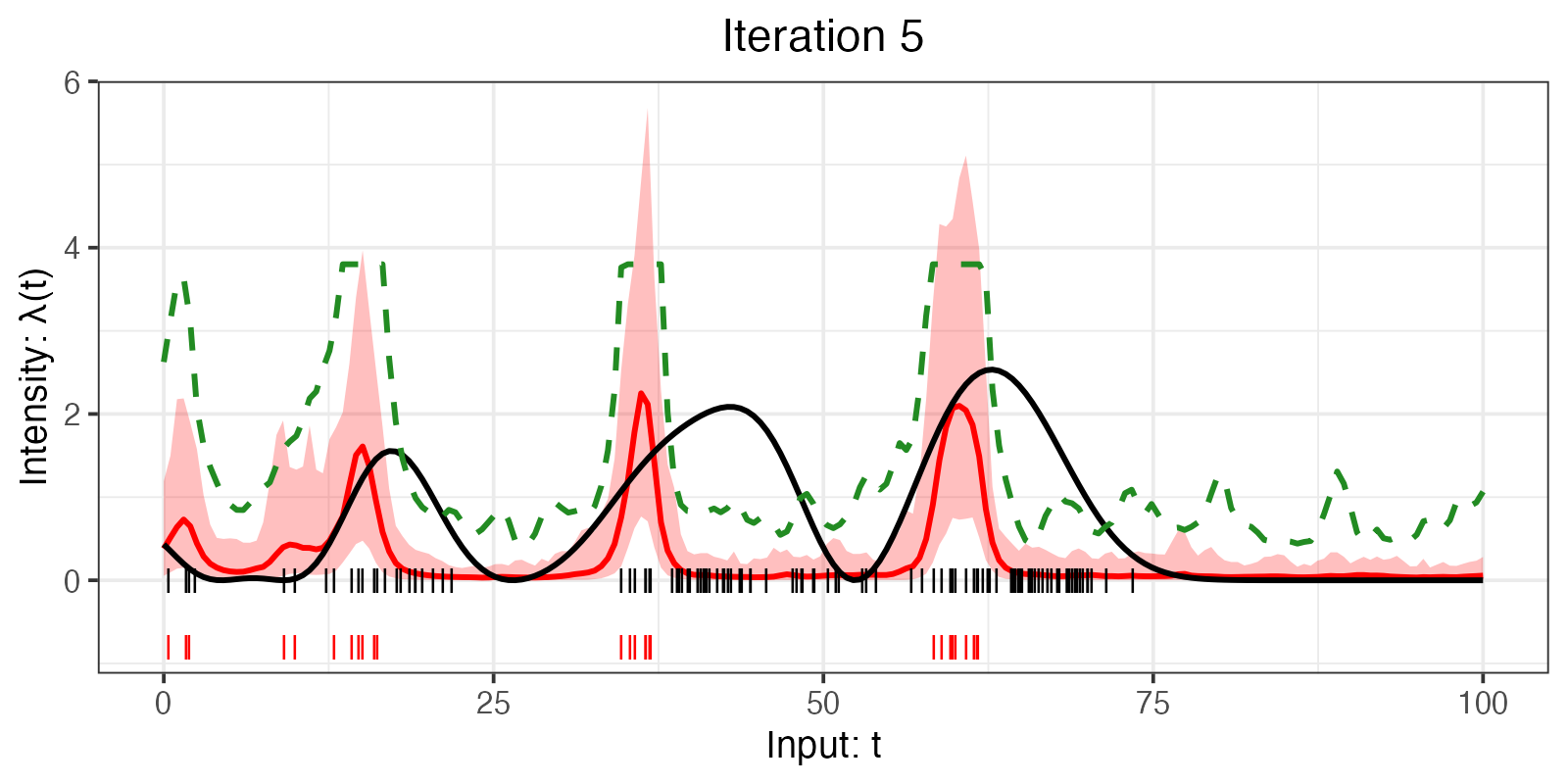}
    \caption{Iteration 5}
  \end{subfigure}
  \hfill
  \begin{subfigure}[t]{0.31\textwidth}
    \centering
    \includegraphics[width=\linewidth]{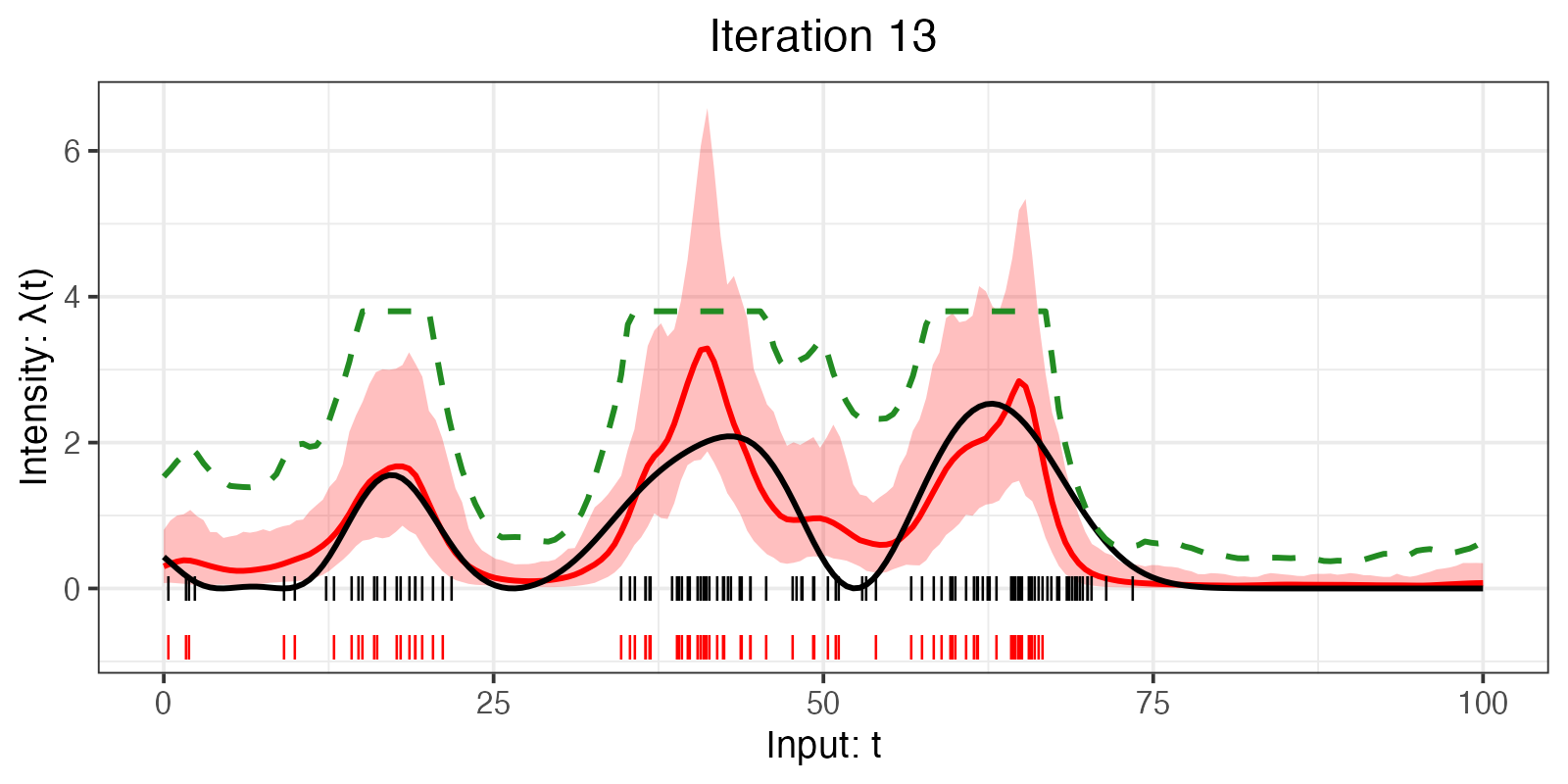}
    \caption{Iteration 13}
  \end{subfigure}

  \caption{Stepwise visualization of ISBO over the synthetic
  intensity. The red line is the posterior mean intensity, the pink band is
  the 95\% credible interval, the green dashed line is the acquisition
  function, the black line is the ground-truth intensity, the black vertical
  ticks denote all incident times, and the red vertical ticks denote the
  revealed incident times.}
  \label{fig:bo_synth_steps}
\end{figure*}

\newpage

\section{Pseudocode for 1D Temporal BO via INLA--SPDE--LGCP}
\label{app:ps_code1}

\begin{table}[h]
\centering
\renewcommand{\arraystretch}{1.10}
\begin{tabular}{p{0.08\linewidth}p{0.84\linewidth}}
\hline
\multicolumn{2}{l}{
\textbf{Algorithm 2 \quad 1D Temporal ISBO Implementation}
}
\\
\hline

\textbf{Input}
&
Event times $\mathcal A=\{t_i\}$; candidate grid $\mathcal G$;
initial centres $\mathcal C_0$; query radius $r$; budget $T$;
$\kappa_{\mathrm{start}},\kappa_{\mathrm{final}}$.
\\

\textbf{Initialise}
&
Construct the 1D SPDE domain and integration meshes, define the Mat\'ern
LGCP surrogate, reveal events around $\mathcal C_0$, and fit the initial
posterior.
\\

\hline
\multicolumn{2}{l}{\textbf{for} $t=1,\ldots,T$ \textbf{do}}
\\

1 &
Predict $\mu_{\eta,t-1}(x)$ and $\sigma_{\eta,t-1}(x)$ on $\mathcal G$.
\\

2 &
Compute
\[
\kappa_t
=
\kappa_{\mathrm{start}}
-
\frac{t-1}{T-1}
(\kappa_{\mathrm{start}}-\kappa_{\mathrm{final}})
\]
and
\[
a_t(x)
=
\mu_{\eta,t-1}(x)+\kappa_t\sigma_{\eta,t-1}(x).
\]
\\

3 &
Mask previously revealed regions and select
\[
x_t
\in
\arg\max_{x\in\mathcal C_{t-1}} a_t(x).
\]
\\

4 &
Reveal events in
$\mathcal W(x_t;r)$, update the masked candidate set, and refit the LGCP.
\\

5 &
Record posterior summaries and region-based optimization metrics.
\\

\multicolumn{2}{l}{\textbf{end for}}
\\
\hline

\textbf{Return}
&
Selected centres, revealed events, and per-iteration LGCP posteriors and
optimization metrics.
\\

\hline
\end{tabular}

\caption{\small Detailed one-dimensional implementation of ISBO.}
\label{alg:bo-inla-lgcp-1d}
\end{table}

\section{2D Spatial Intensity Estimation (Porto Taxi Trajectories)}
\label{app:2DSint}

\paragraph{Data and spatial model.}
Let the spatial observation window be the unit square \(S=[0,1]\times[0,1]\). We linearly rescale the raw taxi starting locations to \(S\) and restrict both inference and prediction to this domain. The full dataset contains 106{,}732 trips, as shown in Figure~\ref{fig:taxi_full_data}. For visualization, we fit the model using subsets of 200 and 66{,}666 events.

\begin{figure}[h]
    \centering
    \includegraphics[width=0.35\textwidth]
    {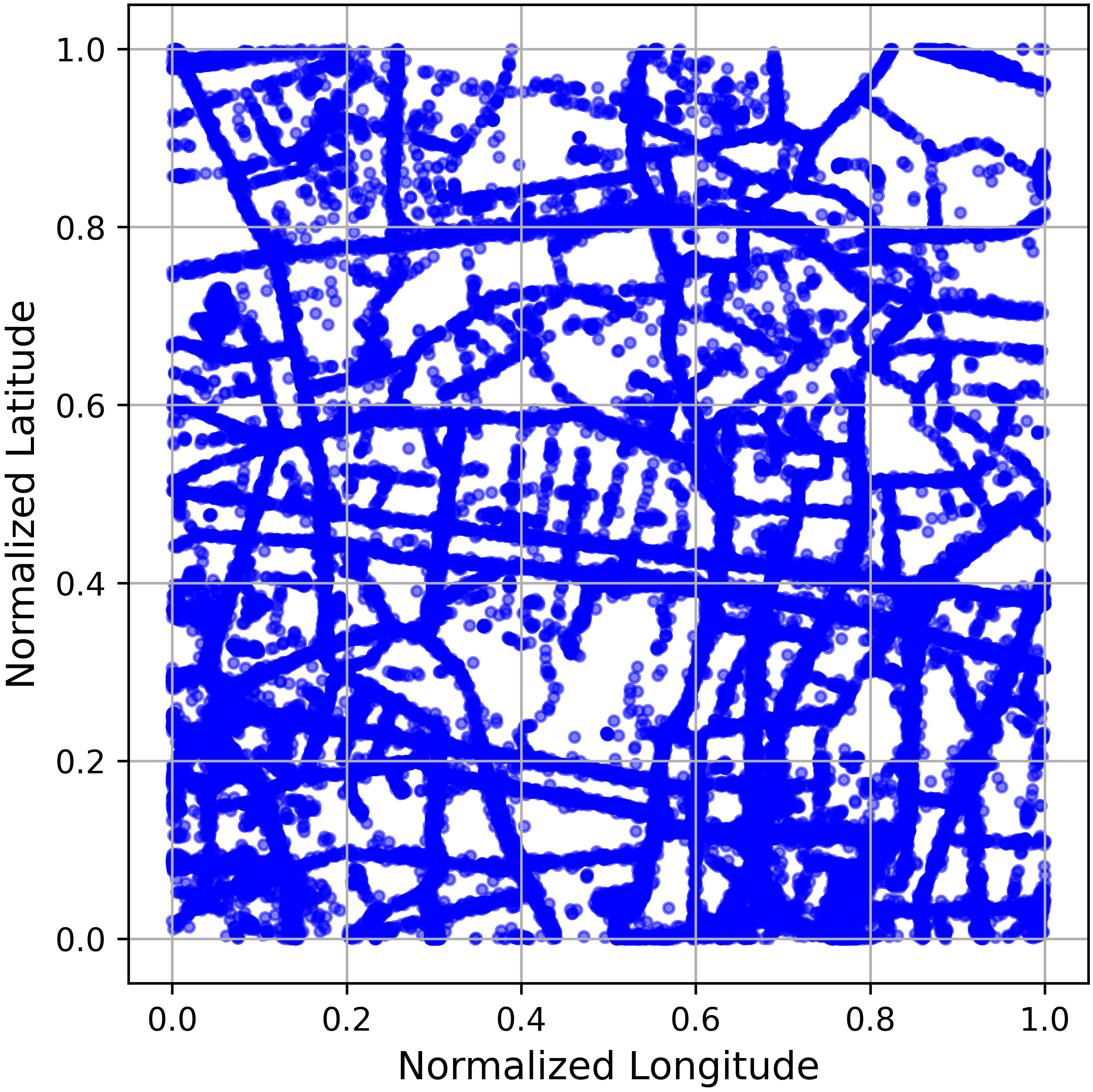}
    \caption{Starting locations of the full Porto taxi dataset.}
    \label{fig:taxi_full_data}
\end{figure}

We construct a triangular mesh \(\mathcal M_s\) over \(S\) using \texttt{inla.mesh.2d} with \texttt{max.edge = c(0.1,0.2)} and \texttt{cutoff = 0.01}. The latent spatial field follows a Mat\'ern--SPDE model with PC priors
\begin{align}
    \Pr\{\rho<0.2\}
    &=
    0.5,
    &
    \Pr\{\sigma>1\}
    &=
    0.01.
\end{align}
The LGCP predictor consists of an intercept and the spatial Mat\'ern field. Inference is performed using \texttt{inlabru::lgcp} with empirical Bayes integration.

For comparison, we additionally fit the Nystr\"om-based method of \citet{meiBayesianOptGCPMstD} to the same 200-event subset. Its estimated intensity is shown in Figure~\ref{fig:taxi_baseline_inten}.

\begin{figure}[h]
    \centering
    \includegraphics[width=0.35\textwidth]
    {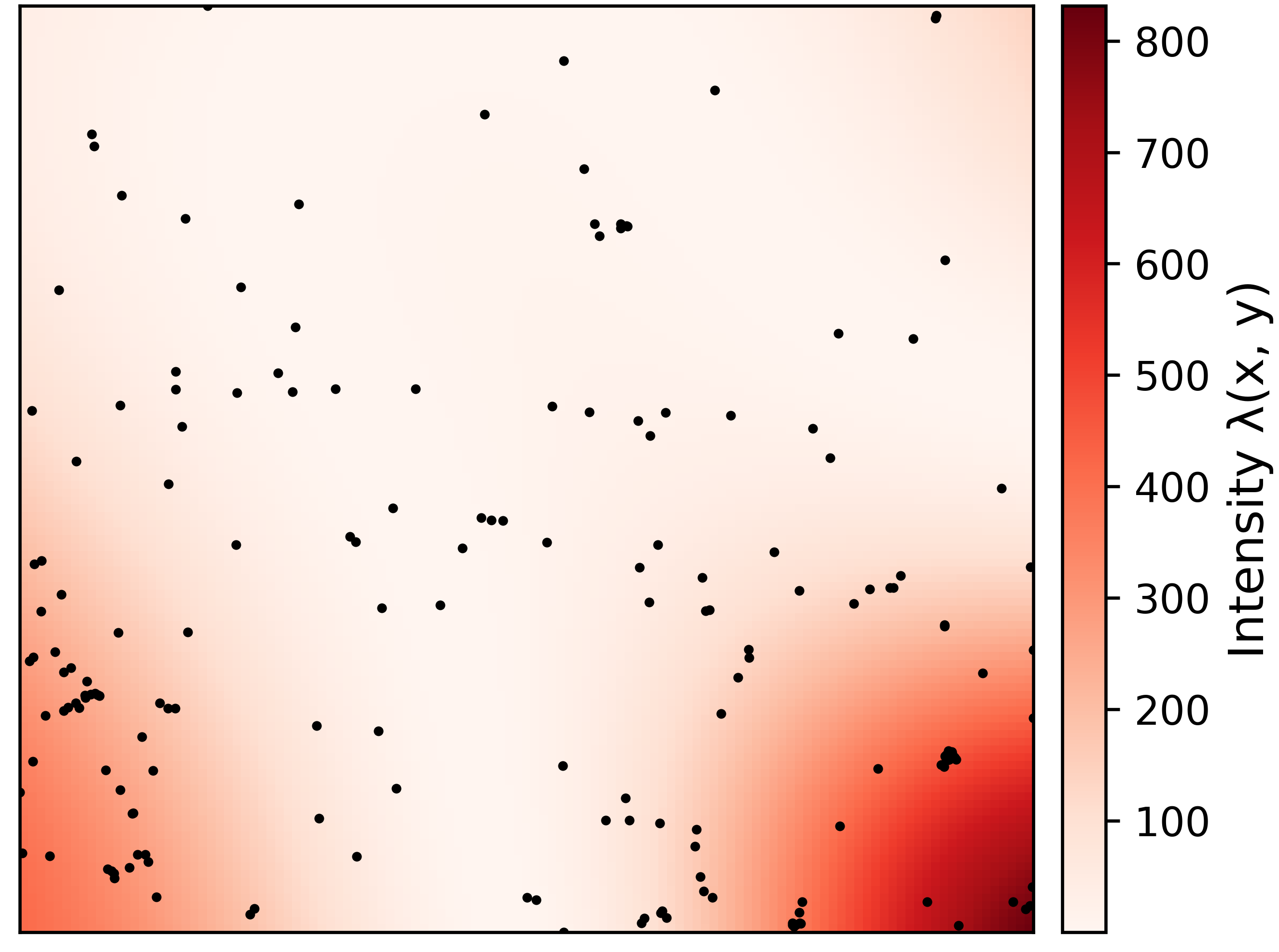}
    \caption{Spatial intensity estimated by the Nystr\"om baseline using
    200 taxi trips. Black points denote the observed starting locations.}
    \label{fig:taxi_baseline_inten}
\end{figure}

\paragraph{Posterior prediction.}
Predictions are evaluated on a pixel grid \(\mathcal P\) generated from \(\mathcal M_s\) and masked to \(S\). We report both the latent predictor \(\eta(\mathbf s)\) and the response-scale intensity \(\lambda(\mathbf s)=\exp\{\eta(\mathbf s)\}\). Figure~\ref{fig:2d_intensity_200_6666} compares the posterior mean surfaces obtained from 200 and 66{,}666 events.

\begin{figure}[h]
  \centering
  \includegraphics[width=0.5\linewidth]
  {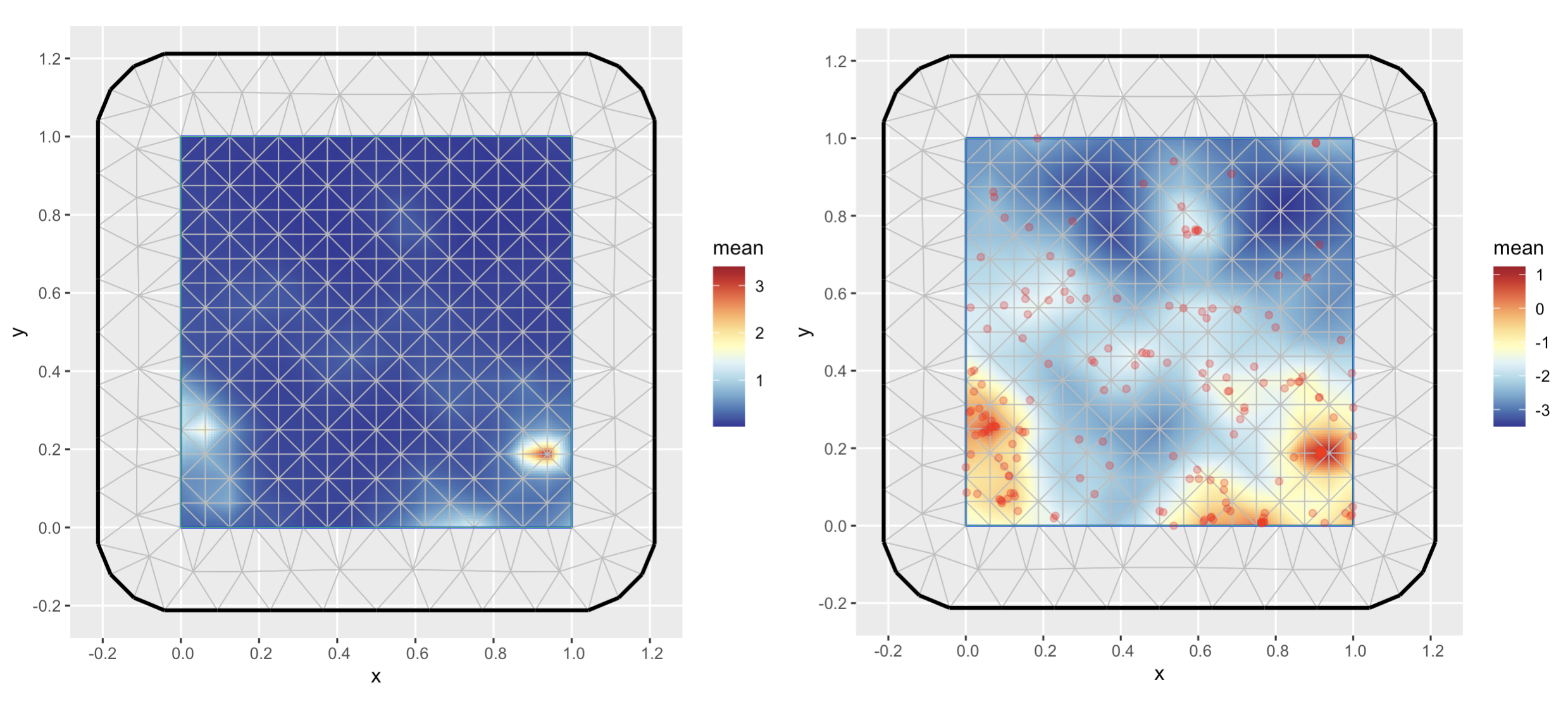}\\[0.8ex]
  \includegraphics[width=0.5\linewidth]
  {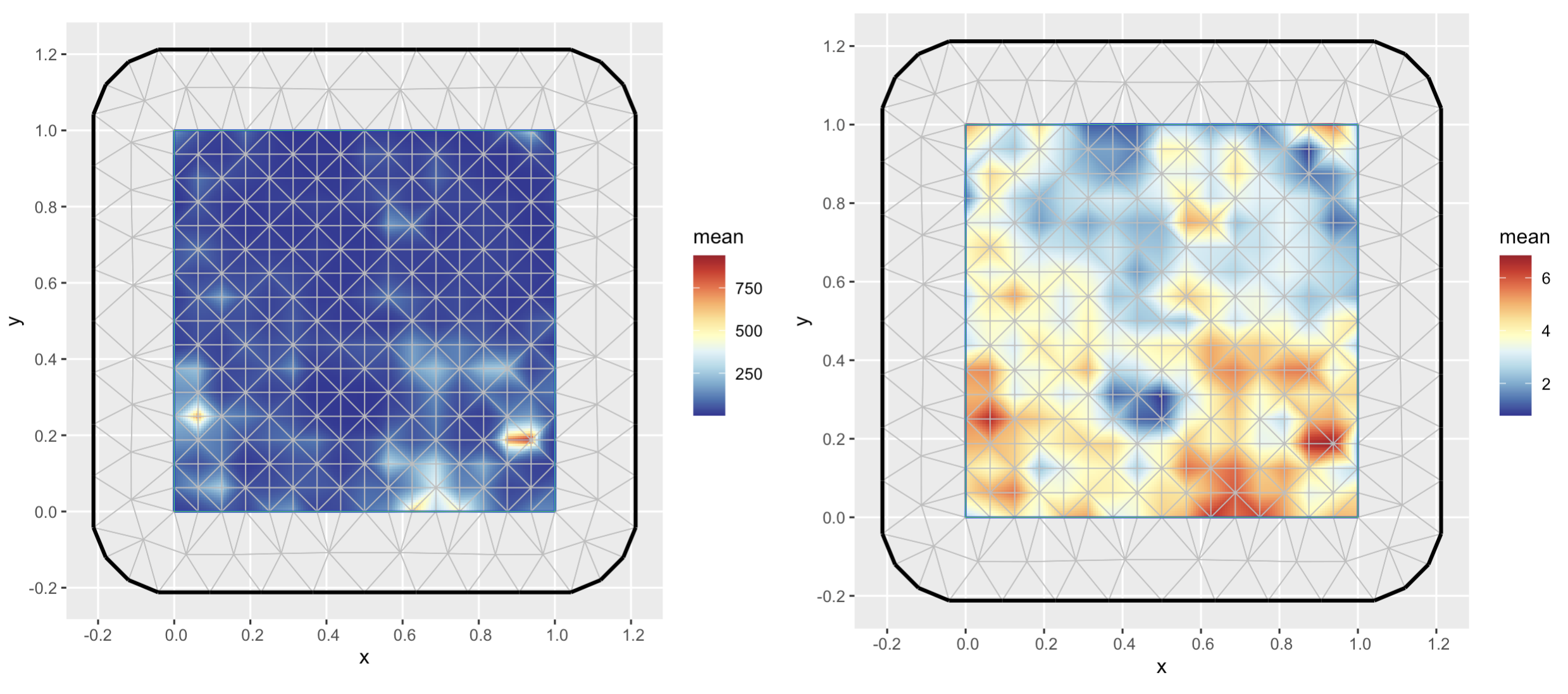}
  \caption{
  Posterior spatial intensity for the Porto taxi data using
  200 events (top) and 66{,}666 events (bottom).
  Left panels show the response scale and right panels the latent scale.
  }
  \label{fig:2d_intensity_200_6666}
\end{figure}

\paragraph{Runtime benchmark.}
All runtime experiments are performed locally on the same MacBook Air M2 with 16\,GB RAM, without HPC resources. We compare INLA--SPDE with the Nystr\"om inference procedure of \citet{meiBayesianOptGCPMstD} and the Laplace and variational approximations implemented in SCAMPR \citep{dovers2023fast}.

For the small-to-moderate benchmark, we use common event subsets ranging from 200 to 2{,}500 events and compare all four inference procedures. For the large-scale benchmark, we compare INLA--SPDE, SCAMPR--Laplace, and SCAMPR--VA using subsets from 5{,}000 to 55{,}000 events. The same sampled events are supplied to the competing methods at each sample size, and the reported runtime is the mean fitting time across repeated runs. The resulting benchmarks are reported in Figure~\ref{fig:runtime_benchmark}.

\section{Bayesian Optimization over 2D Spatial Intensity}
\label{app:bo_spatial}

We perform Bayesian optimization directly over the spatial domain of the Washington, D.C.\ firearm-related crime data. Event coordinates are projected to UTM zone 18N (EPSG:26918), and only events inside the D.C.\ boundary are retained. 

\paragraph{Spatial LGCP setup.}
Let \(S\) denote the spatial domain and let \(\ell_S\) be the shorter side of its bounding box. We construct the triangular SPDE mesh using
\begin{align}
    \texttt{max.edge}
    &=
    \left(
        \frac{\ell_S}{25},
        \frac{\ell_S}{8}
    \right),
    &
    \texttt{cutoff}
    &=
    \frac{\ell_S}{150}.
\end{align}
The Mat\'ern field uses PC priors
\begin{align}
    \Pr\{\rho<\ell_S/5\}
    &=
    0.01,
    &
    \Pr\{\sigma>1\}
    &=
    0.01.
\end{align}
The LGCP predictor consists of an intercept and the spatial SPDE field, and inference is performed using \texttt{inlabru::lgcp} with the empirical Bayes strategy. 
\paragraph{Spatial BO settings.}
The BO candidate locations are the spatial pixels generated from the same mesh and masked to \(S\). For each seed, two initial centres are sampled from this candidate set. A query at \(\mathbf s_t\) reveals all events within a disk of radius
\begin{align}
    r=1000\ \mathrm{m}.
\end{align}
Previously queried regions are masked with no additional dilation. We use \(T=25\) BO iterations and the decreasing exploration schedule in
\eqref{eq:kappa-schedule}, with
\begin{align}
    \kappa_{\mathrm{start}}=10,
    \qquad
    \kappa_{\mathrm{final}}=2.
\end{align}
The experiment is repeated over ten seeds,
\begin{align}
    13,\,23,\,33,\,43,\,53,\,63,\,73,\,83,\,93,\,103.
\end{align}

At each iteration, the spatial LGCP is refitted using only the currently revealed events, and posterior predictions are evaluated on the common spatial candidate grid. The next query location is selected after masking previously queried disks. 

\begin{figure*}[t]
    \centering
    \begin{minipage}[t]{0.49\textwidth}
        \centering
        \includegraphics[width=\linewidth]
        {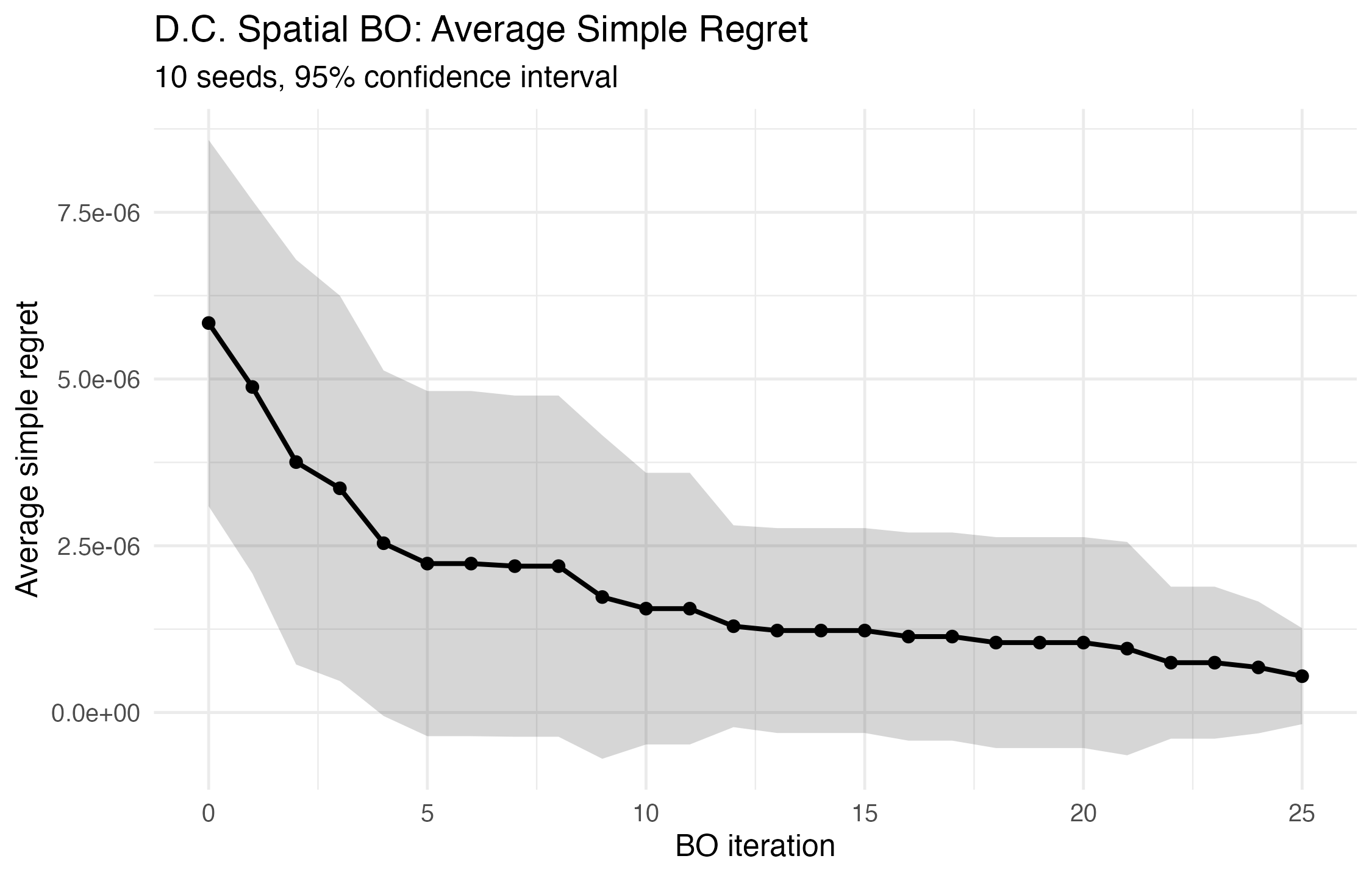}
        \small (a) Average simple regret with 95\% confidence intervals.
    \end{minipage}
    \hfill
    \begin{minipage}[t]{0.49\textwidth}
        \centering
        \includegraphics[width=\linewidth]
        {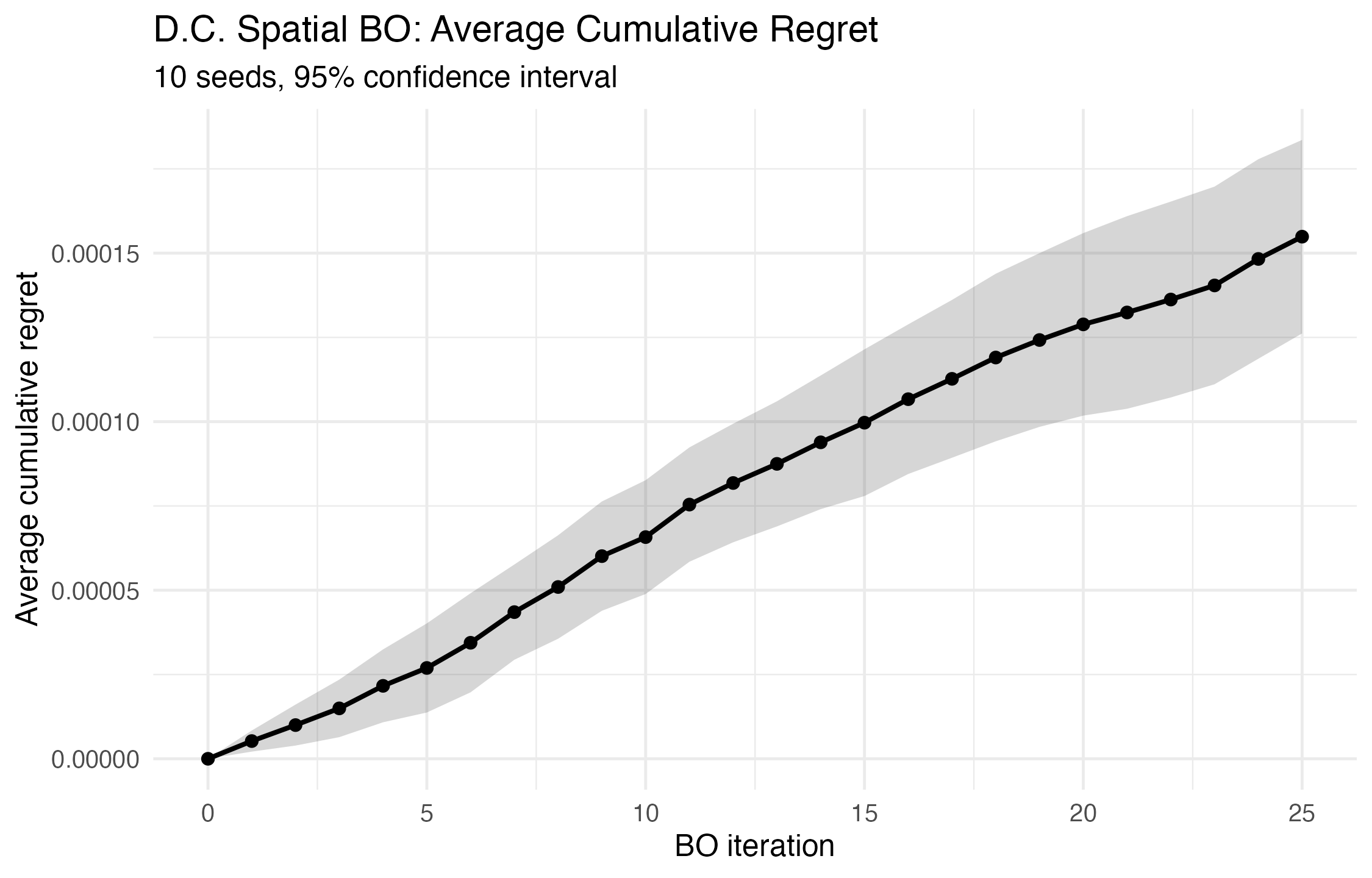}
        \small (b) Average cumulative regret with 95\% confidence intervals.
    \end{minipage}

    \caption{
        Direct two-dimensional spatial Bayesian optimization on the
        Washington, D.C.\ crime data over 10 seeds. Shaded regions denote
        95\% confidence intervals.
    }
    \label{fig:dc_spatial_regret}
\end{figure*}

\paragraph{Reference intensity and evaluation.}
Since the true spatial intensity is unavailable for the real data, we use the LGCP intensity fitted to the full dataset as the reference surface, \(\widehat{\lambda}_{\mathrm{ref}}(\mathbf s)\), with
\begin{align}
    \widehat{\lambda}^{\star}_{s}
    =
    \max_{\mathbf s\in S}
    \widehat{\lambda}_{\mathrm{ref}}(\mathbf s).
\end{align}
For each queried disk, its value is defined as the maximum reference intensity contained within that region. Simple and cumulative regret are then computed using the region-based definitions in \eqref{eq:instantaneous-regret}--\eqref{eq:simple-regret}, replacing the unknown ground-truth intensity with \(\widehat{\lambda}_{\mathrm{ref}}\).

The reported regret curves are averaged over the ten seeds, with 95\% Student-\(t\) confidence intervals for the mean. Figure~\ref{fig:dc_spatial_regret} summarizes the spatial BO performance over 10 seeds. The average simple regret decreases steadily throughout the 25 iterations, showing that ISBO progressively identifies spatial regions closer to the maximum of the full-data reference intensity. The confidence interval also contracts in later iterations, indicating increasingly consistent peak localization across runs. As expected, cumulative regret increases with the number of queries because it aggregates the regret incurred at every iteration.

\paragraph{Visualization.}
Figure~\ref{fig:bo_spDC_steps} provides a qualitative visualization of the spatial BO process. At the initial stage, the posterior intensity surface is still diffuse because only a small portion of the crime events has been revealed. By around iteration 11, the main hotspot structure has already emerged, and the posterior surface becomes much more informative. By iteration 25, after more regions have been queried, the spatial intensity map is substantially sharper and the dominant high-intensity areas are clearly identified. This qualitative behaviour is consistent with the regret curves: ISBO progressively refines the spatial surrogate and concentrates exploration on the most informative regions.

\begin{figure*}[t]
  \centering
  \begin{minipage}[t]{0.33\textwidth}
    \includegraphics[width=\linewidth]{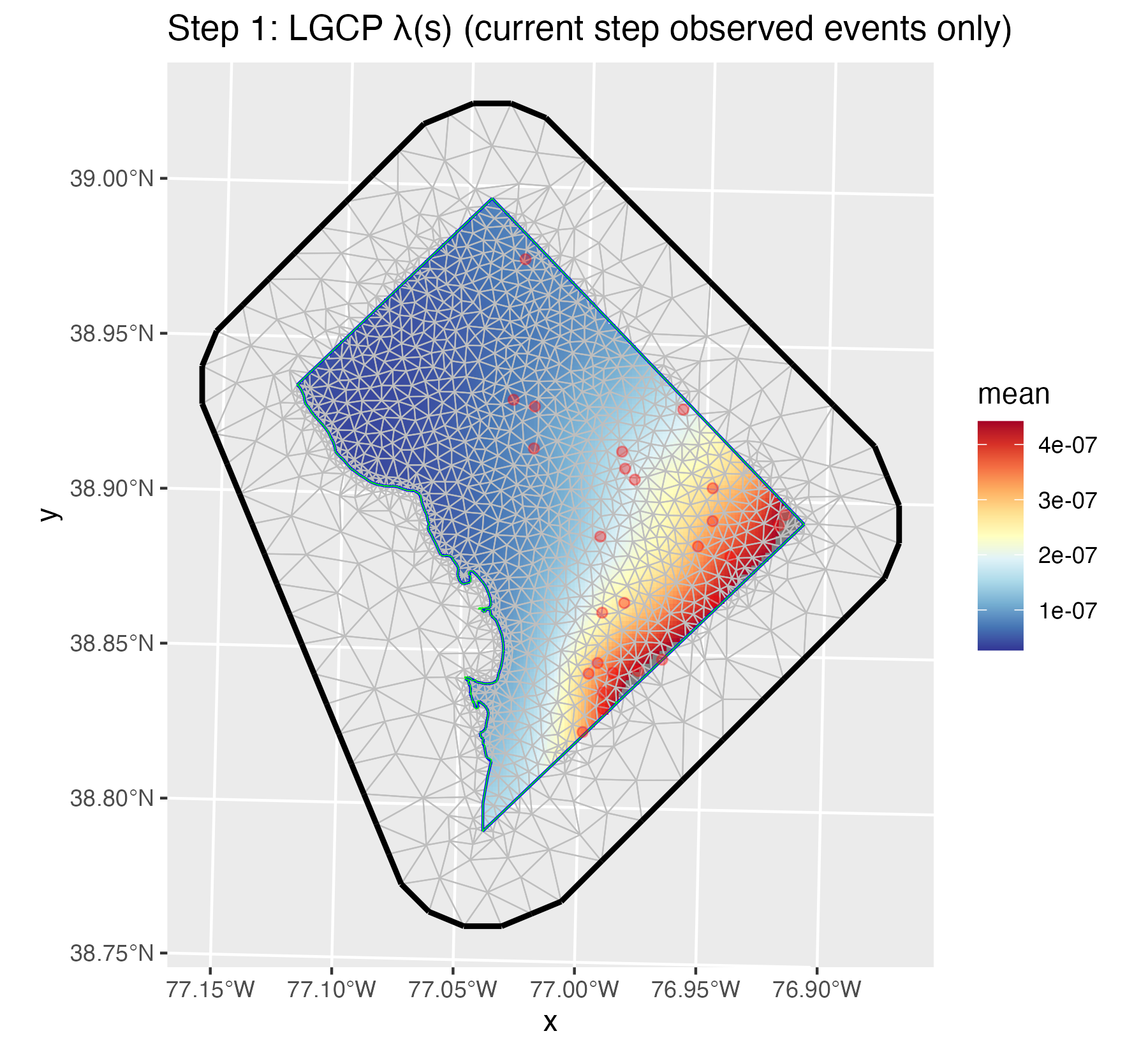}
  \end{minipage}\hfill
  \begin{minipage}[t]{0.33\textwidth}
    \includegraphics[width=\linewidth]{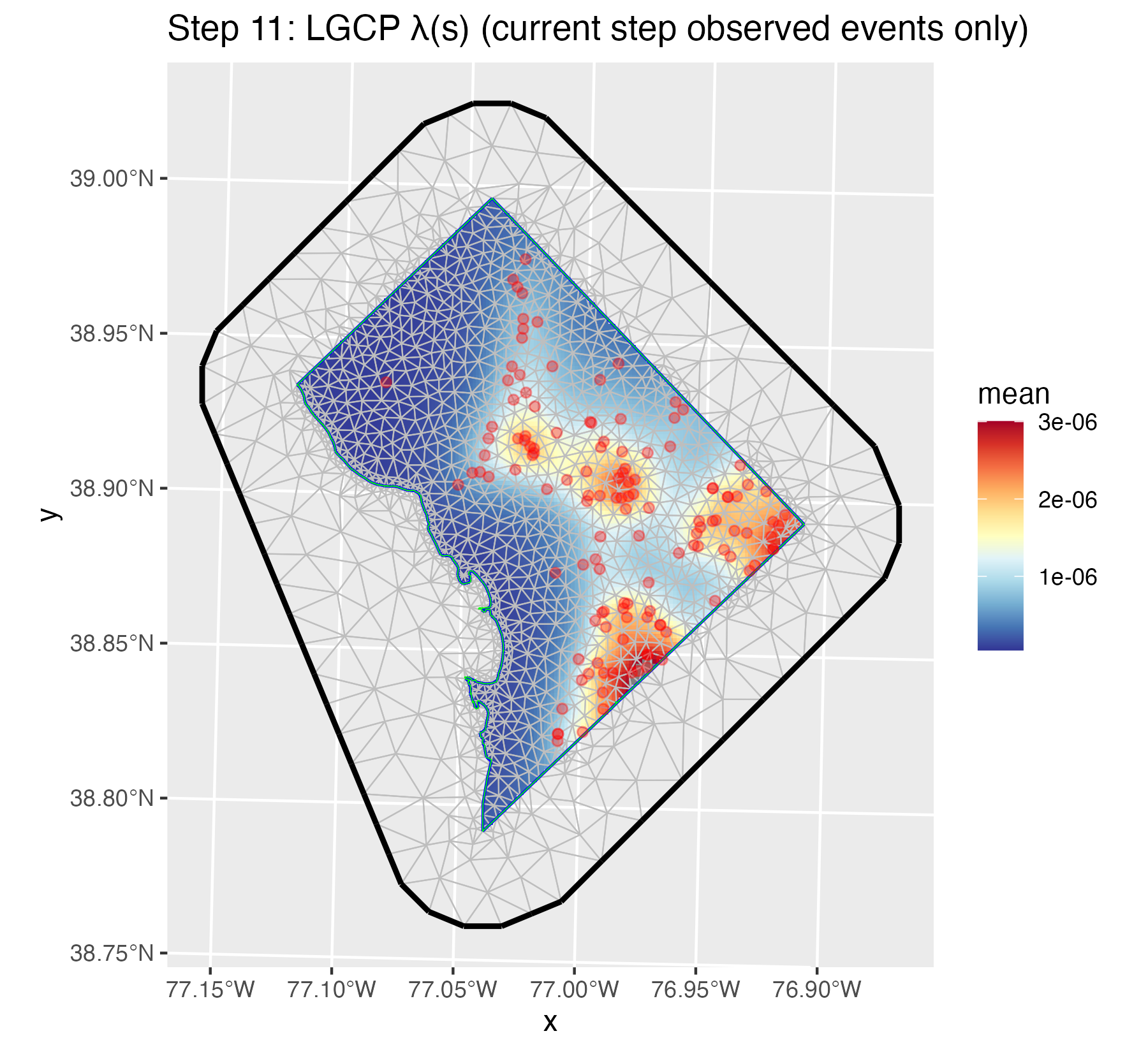}
  \end{minipage}\hfill
  \begin{minipage}[t]{0.33\textwidth}
    \includegraphics[width=\linewidth]{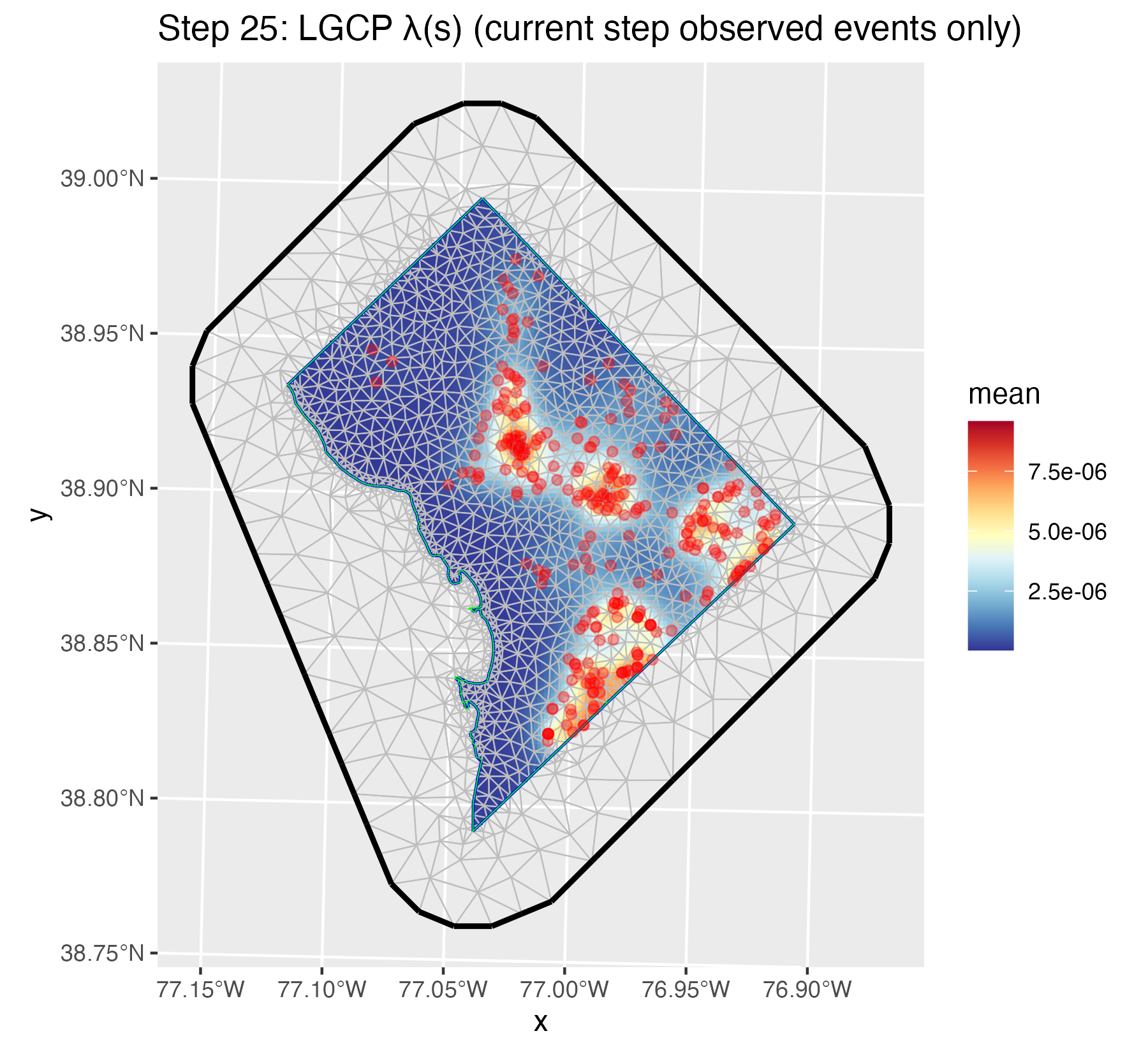}
  \end{minipage}
  \caption{Visualization of BO step-wise spatial intensity, red dots: used crime locations for current step}
  \label{fig:bo_spDC_steps}
\end{figure*}

\newpage

\section{Pseudocode for 2D Spatial BO via INLA--SPDE--LGCP}
\label{app:ps_code2}

\begin{table}[h]
\centering
\renewcommand{\arraystretch}{1.10}
\begin{tabular}{p{0.08\linewidth}p{0.84\linewidth}}
\hline
\multicolumn{2}{l}{
\textbf{Algorithm 3 \quad 2D Spatial ISBO Implementation}
}
\\
\hline

\textbf{Input}
&
Spatial events $\mathcal A=\{\mathbf s_i\}$; spatial domain $S$; candidate grid $\mathcal G_s$; initial centres $\mathcal C_0$; query radius $r$; budget $T$; $\kappa_{\mathrm{start}},\kappa_{\mathrm{final}}$.
\\

\textbf{Initialise}
&
Project event locations to a metric coordinate system and retain events inside $S$. Construct the triangular SPDE mesh, define the Mat\'ern LGCP surrogate, and generate the common spatial candidate grid $\mathcal G_s$ from the prediction pixels.
\\

&
Sample the initial centres $\mathcal C_0$, reveal all events within radius $r$ of these centres, and fit the initial spatial LGCP posterior. Use the full-data fitted intensity $\widehat{\lambda}_{\mathrm{ref}}(\mathbf s)$ as the reference surface for regret evaluation.
\\

\hline
\multicolumn{2}{l}{\textbf{for} $t=1,\ldots,T$ \textbf{do}}
\\

1 &
\textbf{Fit and predict:} fit the spatial LGCP to the currently revealed events and obtain posterior mean and uncertainty over $\mathcal G_s$.
\\

2 &
\textbf{Update exploration:}
compute
\[
\kappa_t
=
\kappa_{\mathrm{start}}
-
\frac{t-1}{T-1}
(\kappa_{\mathrm{start}}-\kappa_{\mathrm{final}})
\]
and construct the spatial UCB acquisition.
\\

3 &
\textbf{Mask and select:}
exclude candidate locations within radius $r$ of previously queried centres,
then select the admissible location with the largest acquisition value.
\\

4 &
\textbf{Reveal region:}
define the queried disk
\[
\mathcal W(\mathbf s_t;r)
=
\{\mathbf s\in S:
\|\mathbf s-\mathbf s_t\|\le r\},
\]
and reveal all observed events contained in this region.
\\

5 &
\textbf{Update optimization metrics:}
evaluate the maximum reference intensity inside
$\mathcal W(\mathbf s_t;r)$ and update the region-based instantaneous,
simple, and cumulative regret.
\\

6 &
\textbf{Update surrogate:}
add $\mathbf s_t$ to the query history, update the revealed event set and
candidate mask, and refit the LGCP for the next iteration.
\\

\multicolumn{2}{l}{\textbf{end for}}
\\
\hline

\textbf{Return}
&
Spatial query centres, revealed event sets, per-iteration posterior intensity
maps, fitting times, and region-based optimization metrics.
\\

\hline
\end{tabular}

\caption{\small Detailed two-dimensional spatial implementation of ISBO.}
\label{alg:bo-inla-lgcp-2d}
\end{table}

\end{document}